\documentclass{article}
\usepackage[left=0.8in, top=1in]{geometry}
\usepackage{amsmath,amsfonts}
\usepackage{algorithm,algpseudocode}
\usepackage{array}
\usepackage[caption=false,font=normalsize,labelfont=sf,textfont=sf]{subfig}
\usepackage{textcomp}
\usepackage{stfloats}
\usepackage{url}
\usepackage{verbatim}
\usepackage{graphicx}
\usepackage{cite}

\usepackage{multirow, multicol}
\usepackage{xcolor}
\usepackage{makecell}
\usepackage{booktabs}
\usepackage{xr}
\usepackage{xr-hyper}
\usepackage{hyperref}

\usepackage{amsmath,amsfonts,bm}
\usepackage{amsthm,amssymb}

\newcommand{\df}[1]{\mathrm{d}#1}
\newcommand{\e}{\mathrm{e}}

\def\rvc{{\mathbf{c}}}
\def\rvd{{\mathbf{d}}}
\def\rve{{\mathbf{e}}}
\def\rvf{{\mathbf{f}}}

\def\rvh{{\mathbf{h}}}
\def\rvu{{\mathbf{i}}}

\def\rvk{{\mathbf{k}}}

\def\rvn{{\mathbf{n}}}

\def\rvr{{\mathbf{r}}}
\def\rvs{{\mathbf{s}}}
\def\rvt{{\mathbf{t}}}
\def\rvu{{\mathbf{u}}}

\def\rvx{{\mathbf{x}}}
\def\rvy{{\mathbf{y}}}
\def\rvz{{\mathbf{z}}}

\def\mSigma{{\bm{\Sigma}}}

\def\mA{{\bm{A}}}
\def\mB{{\bm{B}}}

\def\mH{{\bm{H}}}
\def\mI{{\bm{I}}}

\def\mV{{\bm{V}}}
\def\mW{{\bm{W}}}

\def\mSigma{{\bm{\Sigma}}}

\DeclareMathAlphabet{\mathsfit}{\encodingdefault}{\sfdefault}{m}{sl}
\SetMathAlphabet{\mathsfit}{bold}{\encodingdefault}{\sfdefault}{bx}{n}

\def\gD{{\mathcal{D}}}
\def\gE{{\mathcal{E}}}

\def\gH{{\mathcal{H}}}
\def\gI{{\mathcal{I}}}

\def\gN{{\mathcal{N}}}

\def\gS{{\mathcal{S}}}

\def\sI{{\mathbb{I}}}

\def\sN{{\mathbb{N}}}

\def\sR{{\mathbb{R}}}

\newcommand{\E}{\mathbb{E}}

\newcommand{\KL}{D_{\mathrm{KL}}}

\newcommand{\Tr}{\mathrm{Tr}}

\usepackage[autostyle=true]{csquotes}

\theoremstyle{plain}
\newtheorem{theorem}{Theorem}
\newtheorem*{theorem*}{Theorem}  

\newtheorem{proposition}{Proposition}
\newtheorem{assumption}{Assumption}

\theoremstyle{definition}

\theoremstyle{remark}
\newtheorem*{remark}{Remark}

\begin{document}

\title{Unpaired Joint Distribution Modeling via Multi-Scale Image Representations}

\author{
Yihang Zou\thanks{Yau Mathematical Sciences Center and Department of Mathematical Sciences, Tsinghua University, Beijing, China {({zou-yh21@mails.tsinghua.edu.cn}).}},
Hui Zhang\thanks{Qiuzhen College, Tsinghua University, Beijing, China ({zhanghui23@mails.tsinghua.edu.cn}).}, 
Zuowei Shen\thanks{Department of Mathematics, National University of Singapore, Singapore ({matzuows@nus.edu.sg}).},
Chenglong Bao\thanks{Yau Mathematical Sciences Center, Tsinghua University, Beijing, China, Beijing Institute of Mathematical Sciences and Applications, Beijing, China, and State Key Laboratory of Membrane Sciences, Tsinghua University, Beijing, China ({clbao@tsinghua.edu.cn}).}
}
\date{}



\maketitle

\begin{abstract}
\textbf{This paper studies the problem of learning a joint distribution from marginal observations, which is inherently ill-posed due to the ambiguity of feasible couplings. We propose LUD-MSR, a latent-variable probabilistic framework that models the joint distribution via auxiliary representations and optimizes evidence lower bounds using only marginal data. Under mild assumptions, we establish an upper bound on the distribution approximation error. This analysis reveals a trade-off in representation learning between domain consistency and information preservation.
To address this trade-off, we introduce a Multi-Scale image Representation (MSR) mapping that exploits structural similarity at coarse scales while suppressing domain-specific variations. We show that MSR achieves a more favorable balance of this trade-off compared to existing approaches. Experiments on real-world denoising benchmarks, including cryo-electron microscopy (cryo-EM), demonstrate the effectiveness of the proposed framework.}
\end{abstract}

\textbf{Key words.}
Unpaired joint distribution modeling, probability graphical model, approximation error analysis, cryo-EM.

\section{Introduction}
{T}{he} success of deep learning critically depends on the availability of high-quality paired training data~\cite{lecun2015deep,krizhevsky2012imagenet}. However, in many scientific and engineering applications, collecting aligned samples $(\rvx, \rvy) \sim q_{X,Y}$ is expensive or infeasible~\cite{stuart2019comprehensive,hoffman2018cycada}. This challenge has motivated approaches that relax the requirement for paired supervision, including self-supervised~\cite{huang2021neighbor2neighbor,pang2021recorrupted,chen2024exploring,lee2022apbsn,he2022masked,oquab2023dinov2} and unpaired~\cite{zhu2017unpaired,radford2015unsupervised,jiang2021enlightengan,ulyanov2018deep,doersch2015unsupervised} learning methods. In the latter scenario, where only the marginal distributions $\rvx \sim q_{X}$ and $\rvy \sim q_{Y}$ are available, the objective is to infer a joint distribution $p(\rvx,\rvy)$ capable of generating pseudo-paired samples for downstream network training.

Mathematically, unpaired learning is formulated as seeking a joint distribution $p(\rvx, \rvy)$ that approximates the true distribution $q_{X,Y}$ while satisfying the marginal constraints
\begin{equation}
\label{eq:marginal_constraint}
\int p(\rvx, \rvy) \,\mathrm{d}\rvy = q_{X}(\rvx), \quad
\int p(\rvx, \rvy) \,\mathrm{d}\rvx = q_{Y}(\rvy).
\end{equation}
This problem is ill-posed because the set of joint distributions satisfying \eqref{eq:marginal_constraint} is generally not unique. Common strategies to alleviate this issue include regularizing $\rvx$ and $\rvy$ via discriminative terms~\cite{jang2021c2n,chen2022unpaired}, cycle-consistency losses~\cite{parmar2024one,meng2023cyclenet,zhu2017unpaired}, contrastive objectives~\cite{park2020contrastive,liu2021dual,hu2022qs}, or structural assumptions on latent variables (e.g., independence between domain-shared and domain-specific factors~\cite{wolf2021deflow,zheng2022learn,huang2018multimodal,lee2020drit}). While empirically successful across various tasks, these methods often rely on heuristic designs and lack theoretical justification, potentially leading to unstable or inconsistent pseudo-paired sample generation.
To address these challenges, we propose LUD-MSR, an interpretable framework for modeling joint distributions from unpaired data. We introduce a probabilistic graphical model to characterize the joint distribution, utilizing two auxiliary variables, $\rvh_\rvx$ and $\rvh_\rvy$, to represent $\rvx$ and $\rvy$, respectively. Within this graph, we derive training objectives using the evidence lower bounds (ELBOs) for the likelihoods $\log p(\rvx|\rvh_\rvx)$ and $\log p(\rvy|\rvh_\rvy)$, requiring only samples from $q_{X}$ and $q_{Y}$. Under specific assumptions, we characterize the error between the true joint distribution and our derived model. This analysis reveals an inherent trade-off between \emph{domain consistency} and \emph{information preservation},
directly motivating our structural design principles for $\rvh_{\rvx}$ and $\rvh_{\rvy}$. For real-world image noise modeling, we explicitly construct $\rvh_{\rvx}$ and $\rvh_{\rvy}$ via a Multi-Scale image Representation (MSR) mapping and theoretically demonstrate that MSR achieves a superior balance of this trade-off compared to existing approaches. Furthermore, our framework can be naturally extended for the semi-supervised case. Through a clean-to-noisy generation pipeline, LUD-MSR synthesizes realistic paired samples for downstream denoising tasks. Our main contributions are summarized as

\noindent \textbf{(i)} We propose LUD-MSR, a novel unpaired joint distribution modeling framework. Theoretical analysis demonstrates that LUD-MSR achieves a superior trade-off between domain consistency and information preservation,
thereby significantly reducing the distribution approximation error compared to existing methods.

\noindent \textbf{(ii)} LUD-MSR synthesizes realistic pseudo-training pairs for real-world denoising tasks. It surpasses existing noise modeling approaches on standard denoising benchmarks and demonstrates the advantages for the problem of cryo-EM image denoising.

\section{Related Works}
\noindent\textbf{Unpaired joint distribution modeling.}
Given the scarcity of paired data, unpaired joint distribution modeling seeks to learn the joint distribution $q_{X,Y}$ solely from the marginals $\rvx \sim q_{X}$ and $\rvy \sim q_{Y}$. To alleviate this inherent ill-posedness, early approaches~\cite{zhu2017unpaired, hoffman2018cycada} employed cycle-consistent adversarial networks. However, strict pixel-level cycle constraints often restrict generation diversity and induce geometric distortions. 
To address these limitations, subsequent works have maximized mutual information via contrastive learning, utilizing patch-wise frameworks~\cite{park2020contrastive}, dual objectives~\cite{liu2021dual}, or dynamic anchor sampling~\cite{hu2022qs}. 
Beyond contrastive methods, disentangled representations~\cite{huang2018multimodal, lee2020drit} explicitly promote diversity by decomposing the latent space into domain-shared content and domain-specific styles. This paradigm has been further augmented with physical priors, such as density and depth decomposition~\cite{yang2022self}, to handle complex real-world degradations~\cite{chen2022unpaired}. 
Recently, unpaired learning has shifted toward mathematically rigorous optimal transport (OT) and diffusion models. 
Representative methods include approximating the OT plan via shortest path regularization~\cite{xie2023unpaired} and formulating high-fidelity translation with Schr\"odinger Bridges~\cite{su2022dual}. Modern architectures also revisit classic regularization strategies, such as integrating cycle consistency into text-guided diffusion~\cite{meng2023cyclenet} or incorporating adversarial training for single-step generation~\cite{parmar2024one}.
Nevertheless, developing efficient joint distribution models that incorporate both interpretability and task-specific inductive biases remains a critical open challenge.

\noindent\textbf{Noise modeling.}
Real-world noise exhibits spatial correlations and complex patterns arising from the intractable image signal processing (ISP) pipeline. Although physics-informed approaches simulate real-world noise~\cite{guo2019toward,brooks2019unprocessing}, significant domain shifts persist due to hand-crafted model limitations. Consequently, research shifted toward learning-based noise modeling. DANet~\cite{yue2020dual} jointly optimizes a denoiser and a noise generator via adversarial training. Flow-sRGB~\cite{kousha2022modeling} employs normalizing flows (NFs) to capture noise distributions using camera metadata. NeCA~\cite{fu2023srgb} introduces a neighborhood-correlation network to model spatial dependencies, while NAFlow~\cite{kim2024srgb} utilizes NFs to represent camera-specific noise as Gaussian mixtures. However, these supervised methods remain constrained by paired data scarcity. Unpaired noise modeling mitigates this reliance, operating as a special case of unpaired joint distribution modeling where $q_{X}$ and $q_{Y}$ denote clean and noisy image distributions, respectively. C2N~\cite{jang2021c2n} proposes an adversarial clean-to-noisy generation framework. DeFlow~\cite{wolf2021deflow} uses NFs to decouple noise and signals in latent space, whereas LUD-VAE~\cite{zheng2022learn} employs a probabilistic graphical model with inference invariance enforced via linear diffusion. Nevertheless, designing effective latent representations and model priors to fully exploit inherent joint distribution correlations remains an open challenge.

\noindent\textbf{Multi-scale image representation.}
Because \emph{paired clean and noisy images share content features at coarser scales while exhibiting distinct degradation statistics at finer scales}~\cite{liu2018multi,ruderman1993statistics}, employing multi-scale image representations (e.g., wavelet decomposition~\cite{mallat1989theory}) effectively disentangles signal structures from noise across frequency bands. However, this approach presents an inherent trade-off: operating at coarser scales increases domain consistency but discards fine structural details, diminishing the original signal representation. Consequently, optimally balancing these competing objectives remains a pivotal challenge in unpaired noise modeling.

\section{Methodology}
This section is organized as follows. We first introduce LUD-MSR, an unpaired joint distribution modeling approach based on a novel probabilistic graphical model that utilizes auxiliary variables $\rvh_\rvx$ and $\rvh_\rvy$ to represent $\rvx$ and $\rvy$, respectively. For the specific application of image noise modeling, we detail the explicit construction of $\rvh_\rvx$ and $\rvh_\rvy$ via a Multi-Scale image Representation (MSR) mapping. Finally, we extend LUD-MSR to accommodate semi-supervised scenarios.

Throughout this paper, we assume $\rvx, \rvy \in \sR^{K}$. For paired samples $(\rvx, \rvy) \sim q_{X,Y}$, we define the residual as $\rvn = \rvy - \rvx$, which follows the marginal distribution $q_{\rvn}$. The covariance matrices $\mSigma_{X} = \E_{q_{X}(\rvx)}[\rvx\rvx^{\top}]$ and $\mSigma_{\rvn} = \E_{q_{\rvn}(\rvn)}[\rvn\rvn^{\top}]$ are assumed to be well-defined. The notation $f(x) \lesssim g(x)$ (or $f(x) \gtrsim g(x)$) indicates that $f(x) \leq C \, g(x)$ (or $f(x) \geq C \, g(x)$) for some constant $C > 0$. 

\begin{figure*}
\centering
\includegraphics[width=\linewidth]{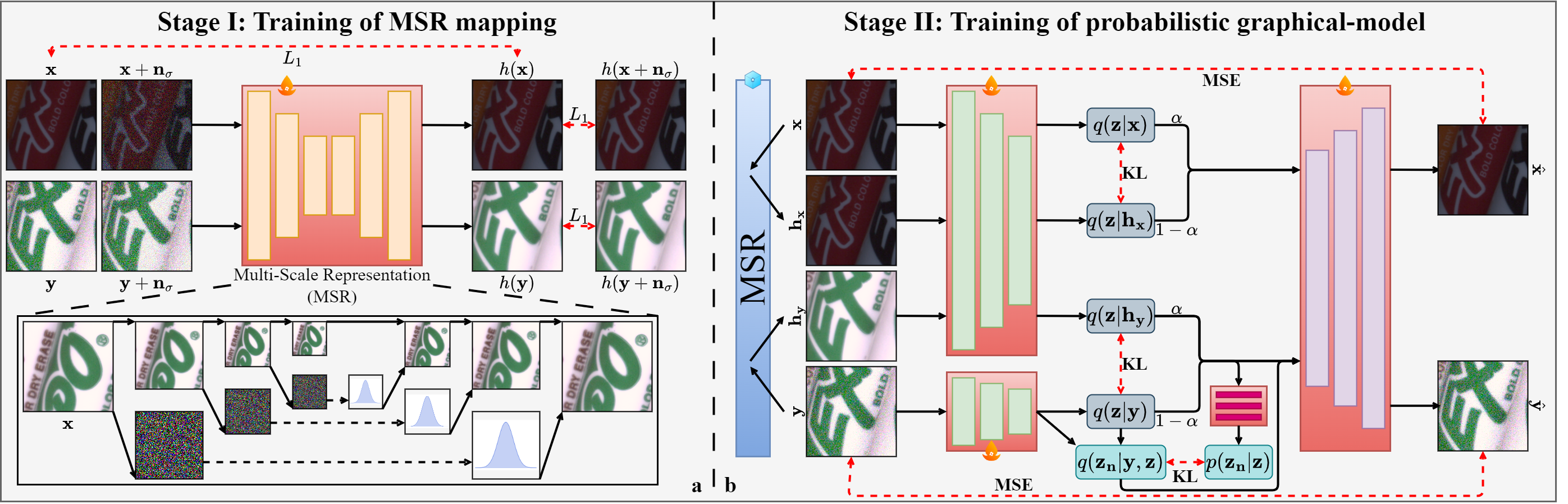}
\caption{Overview of the LUD-MSR training pipeline. \textbf{(a)} Stage I: Training the Multi-Scale image Representation (MSR) mapping. \textbf{(b)} Stage II: Training the probabilistic graphical model. The parameters of the pre-trained MSR mapping remain frozen during this stage.}
\label{fig:lr2flow_arch}
\end{figure*}

\subsection{The Design of LUD-MSR}
\begin{figure}
\centering
\includegraphics[width=0.6\linewidth]{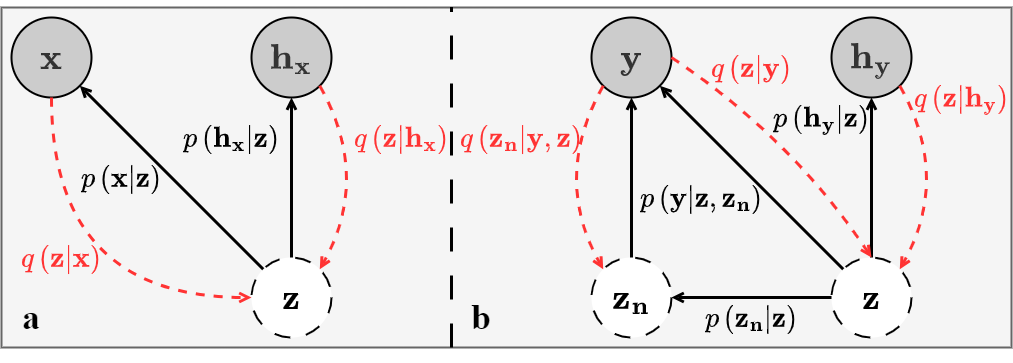}
\caption{Probabilistic graphical model of LUD-MSR. The generative processes for the $\rvx\sim q_{X}$ and $\rvy\sim q_{Y}$ are illustrated in \textbf{(a)} and \textbf{(b)}, respectively.}
\label{fig:SIR-NM_unpaired}
\end{figure}

Given the marginal distributions $q_X$ and $q_Y$, LUD-MSR aims to recover the true joint distribution $q_{X,Y}$. We first propose a latent-variable probabilistic graphical model to characterize the generative processes of $\rvx \sim q_{X}$ and $\rvy \sim q_{Y}$, as illustrated in Figure~\ref{fig:SIR-NM_unpaired}. Here, $\rvh_{\rvx}$ and $\rvh_{\rvy}$ denote \emph{frozen} auxiliary representations associated with $\rvx$ and $\rvy$, respectively. These variables are specifically designed to bridge the implicit coupling between the two domains, enabling joint distribution modeling. Their explicit construction is detailed in Section~\ref{sec:msr}. 
We introduce two latent variables, $\rvz$ and $\rvz_{\rvn}$, representing the domain-shared content and the $\rvy$-specific component, respectively. We assume that $\rvx$, $\rvh_{\rvx}$, and $\rvh_{\rvy}$ are generated solely by $\rvz$, whereas $\rvy$ is generated jointly by $\rvz$ and $\rvz_{\rvn}$.To approximate the intractable posterior distributions, we design the following inference models
\begin{equation}
\label{eq:inference_models_unpaired}
\begin{aligned}
& q(\rvz | \rvx, \rvh_{\rvx}) = \alpha q(\rvz | \rvx) + (1-\alpha) q(\rvz | \rvh_{\rvx}), \\
& q(\rvz | \rvy, \rvh_{\rvy}) = \alpha q(\rvz | \rvy) + (1-\alpha) q(\rvz | \rvh_{\rvy}), \\
& q(\rvz, \rvz_{\rvn} | \rvy, \rvh_{\rvy}) = q(\rvz | \rvy, \rvh_{\rvy}) q(\rvz_{\rvn} | \rvy, \rvz),
\end{aligned}
\end{equation}
where $\alpha \in [0,1]$. 
Furthermore, we derive the $\rvx$-to-$\rvy$ and $\rvy$-to-$\rvx$ conditional generation pipelines from our probabilistic graphical model as
\begin{equation}
\label{eq:conditional_generation}
\begin{aligned}
p(\rvy | \rvx) &= \E_{q(\rvz|\rvx) p(\rvz_{\rvn} | \rvz)}p(\rvy | \rvz, \rvz_{\rvn}), \\
p(\rvx | \rvy) &= \E_{q(\rvz | \rvy)}p(\rvx | \rvz), 
\end{aligned}
\end{equation}
which in turn induce the joint generative models
\begin{equation}
\label{eq:lud-msr_joint}
\begin{aligned}
p_{\text{X$\rightarrow$Y}}(\rvx,\rvy) & = q_{X}(\rvx)p(\rvy | \rvx), \\
p_{\text{Y$\rightarrow$X}}(\rvx,\rvy) & = q_{Y}(\rvy)p(\rvx | \rvy).
\end{aligned}
\end{equation}
Within our probabilistic graphical model, the inference and generative models are parameterized as Gaussian networks. Detailed architectural specifications are provided in Appendix~\ref{appendix:architecture}.
In the following, we quantify the discrepancy between our generative framework and the true joint distribution under mild assumptions.

\begin{assumption}
\label{assumption:lud-msr}
\noindent (i) There exist constants $\varepsilon_0>0$ and $\eta>0$ such that for $(\rvx, \rvy)\sim q_{X,Y}$, the following inequalities hold
\begin{equation}
\label{eq:p_regularity1}
\e^{-\eta}\leq\frac{p(\rvy|\rvx)}{p(\rvy|\rvh_{\rvy})}\leq \e^{\eta} \text{ if } \min\{p(\rvy|\rvx),p(\rvy|\rvh_{\rvy})\} \leq \varepsilon_0,
\end{equation}
\begin{equation}
\label{eq:p_regularity2}
\e^{-\eta}\leq\frac{p(\rvx | \rvy)}{p(\rvx | \rvh_{\rvx})}\leq \e^{\eta} \text{ if } \min \{p(\rvx | \rvy), p(\rvx | \rvh_{\rvx})\} \leq \varepsilon_{0}.
\end{equation}
\noindent (ii) Let $\varphi(\rvz; \rvy) = \E_{p(\rvz_{\rvn} | \rvz)}p(\rvy | \rvz,\rvz_{\rvn})$, assume that $\|\varphi(\rvz; \rvy)\|_{\infty}$ is essentially bounded with respect to $\rvy \sim q_{Y}$, and that $\|p(\rvx | \rvz)\|_{\infty}$ is essentially bounded with respect to $\rvx\sim q_{X}$.
\end{assumption}
\begin{remark}
Assumption~\ref{assumption:lud-msr}(i) imposes a tail-equivalence condition via \eqref{eq:p_regularity1} and \eqref{eq:p_regularity2}, bounding the low-probability discrepancies between $p(\rvy | \rvx)$ and $p(\rvy | \rvh_{\rvy})$, as well as between $p(\rvx | \rvy)$ and $p(\rvx | \rvh_{\rvx})$.
This precludes pathological tail behaviors and regularizes the generative mappings. Additionally, Assumption~\ref{assumption:lud-msr}(ii) posits essentially bounded conditional densities for $\rvx$ and $\rvy$ given $\rvz$ to guarantee model stability.
\end{remark}
Under this assumption, we establish a bound on the discrepancy between the generated and true joint distributions, as formalized in the following theorem. 

\begin{theorem}
\label{thm:dist_error}
Let $p_{\text{X$\to$Y}}$ and $p_{\text{Y$\to$X}}$ be defined as in~\eqref{eq:lud-msr_joint}. Suppose the Assumption~\ref{assumption:lud-msr} holds, we have
\begin{equation}
\label{eq:kl_upper_bound}
\begin{aligned}
& \KL(q_{X,Y} || p_{\text{X$\to$Y}}) + \KL(q_{X,Y} || p_{\text{Y$\to$X}}) \\
\lesssim & -\E_{q_{X}(\rvx)}\log p(\rvx | \rvh_{\rvx}) - \E_{q_{Y}(\rvy)}\log p(\rvy | \rvh_{\rvy}) \\
& + \sqrt{\E_{q_{X}(\rvx)}\KL(q(\rvz | \rvx) || q(\rvz | \rvh_{\rvx}))} 
+ \sqrt{\E_{q_{Y}(\rvy)}\KL(q(\rvz | \rvy) || q(\rvz | \rvh_{\rvy}))} \\
& + \E_{q_{X,Y}(\rvx,\rvy)}\|q(\rvz | \rvh_{\rvx}) - q(\rvz | \rvh_{\rvy})\|_1.
\end{aligned}
\end{equation}
\end{theorem}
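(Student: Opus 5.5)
By symmetry it is enough to bound $\KL(q_{X,Y}\|p_{\text{X$\to$Y}})$; the term for $p_{\text{Y$\to$X}}$ follows by exchanging the roles of $\rvx$ and $\rvy$ and replacing $\varphi(\rvz;\rvy)$ with $p(\rvx|\rvz)$. Since $p_{\text{X$\to$Y}}(\rvx,\rvy)=q_X(\rvx)p(\rvy|\rvx)$, we have
\[
\KL(q_{X,Y}\|p_{\text{X$\to$Y}})=-H(Y|X)-\E_{q_{X,Y}}\log p(\rvy|\rvx),
\]
where the conditional entropy $H(Y|X)$ is a constant of the data and is absorbed into $\lesssim$. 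Insert the reference density $p(\rvy|\rvh_{\rvy})$ and write
\[
-\E_{q_{X,Y}}\log p(\rvy|\rvx) = -\E_{q_Y}\log p(\rvy|\rvh_{\rvy}) + \E_{q_{X,Y}}\log\frac{p(\rvy|\rvh_{\rvy})}{p(\rvy|\rvx)}.
\]
The first term is already on the right-hand side of \eqref{eq:kl_upper_bound}. The whole task is to bound the log-ratio term by the three latent-mismatch terms.

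Both densities can be written as expectations of the same kernel $\varphi(\rvz;\rvy)$. By \eqref{eq:conditional_generation}, $p(\rvy|\rvx)=\E_{q(\rvz|\rvx)}\varphi(\rvz;\rvy)$. Interpreting the graphical model in the natural way, $p(\rvy|\rvh_{\rvy})=\E_{q(\rvz|\rvh_{\rvy})}\varphi(\rvz;\rvy)$. Let $M$ be the essential bound on $\varphi$ from Assumption~\ref{assumption:lud-msr}(ii). Then
\[
|p(\rvy|\rvx)-p(\rvy|\rvh_{\rvy})|\le M\,\|q(\rvz|\rvx)-q(\rvz|\rvh_{\rvy})\|_1 .
\]
By the triangle inequality,
\[
\|q(\rvz|\rvx)-q(\rvz|\rvh_{\rvy})\|_1\le \|q(\rvz|\rvx)-q(\rvz|\rvh_{\rvx})\|_1+\|q(\rvz|\rvh_{\rvx})-q(\rvz|\rvh_{\rvy})\|_1 .
\]
The second piece gives the cross-domain term directly. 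The first piece is handled by Pinsker's inequality followed by Jensen's inequality:
\[
\E\|q(\rvz|\rvx)-q(\rvz|\rvh_{\rvx})\|_1\le\sqrt{2\,\E_{q_X}\KL\big(q(\rvz|\rvx)\|q(\rvz|\rvh_{\rvx})\big)} .
\]
The symmetric argument, with $p(\rvx|\rvz)$ playing the role of $\varphi$, produces the $\rvy$-side square-root term.

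The main obstacle is converting the $L^1$-type difference of densities into a bound on the expected log-ratio, because $\log$ is not Lipschitz near $0$. I will split according to the event $A=\{\min(p(\rvy|\rvx),p(\rvy|\rvh_{\rvy}))>\varepsilon_0\}$.
\begin{itemize}
\item On $A$, the mean value theorem gives $|\log a-\log b|\le |a-b|/\varepsilon_0$. This bounds the contribution from $A$ by $(M/\varepsilon_0)\,\E\|\cdot\|_1$.
\item On $A^c$, \eqref{eq:p_regularity1} gives $|\log a-\log b|\le\eta$. We also have $|\log a-\log b|\le \frac{|a-b|}{\min(a,b)}$. Combining these via $\frac{|a-b|}{\min(a,b)}\le \frac{\max(a,b)}{\min(a,b)}-1\le \e^\eta-1$ is not directly linear in $|a-b|$, so instead I use $|\log a-\log b|\le \e^{\eta}\,|a-b|/\max(a,b)$. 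The factor $1/\max(a,b)$ is then handled by integrating against $q_{X,Y}$ and using boundedness of the true conditional relative to the model densities.
\end{itemize}
If this last step proves delicate, the fallback is to bound the contribution from $A^c$ by $\eta$ times the probability of $A^c$. That probability can in turn be controlled by Markov's inequality in terms of the same $L^1$ quantities. Collecting the constants $M$, $\varepsilon_0$, $\eta$ into $\lesssim$ then yields \eqref{eq:kl_upper_bound}.
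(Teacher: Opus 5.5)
Up to the last step your argument is the paper's: the decomposition $\KL(q_{X,Y}\|p_{\text{X$\to$Y}})=-H(Y|X)-\E\log p(\rvy|\rvx)$, the insertion of $p(\rvy|\rvh_{\rvy})$, the bound $|p(\rvy|\rvx)-p(\rvy|\rvh_{\rvy})|\le M\|q(\rvz|\rvx)-q(\rvz|\rvh_{\rvy})\|_1$, the triangle inequality through $q(\rvz|\rvh_{\rvx})$, Pinsker plus Jensen, and the mean-value bound $\varepsilon_0^{-1}|a-b|$ on $A$ (with $a=p(\rvy|\rvx)$, $b=p(\rvy|\rvh_{\rvy})$).

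The gap is on $A^c$. There you try to get a bound linear in the $L^1$ discrepancies with no additive constant, and that cannot be done under Assumption~\ref{assumption:lud-msr}. Your first route is a valid inequality. But it needs $q_{Y|X}$ to be dominated by $\max\{a,b\}$, which is not assumed: the assumption only compares the two model densities with each other. Without that domination, $1/\max\{a,b\}$ is unbounded on $A^c$.

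The Markov fallback fails for a structural reason. $A^c$ is the event that the densities are \emph{small}, not that their difference is large, so $\Pr(A^c)$ is not controlled by any $L^1$ discrepancy. For example, density values $\delta$ and $\delta\e^{\eta}$ give log-ratio exactly $\eta$ while $|a-b|=O(\delta)$. This is realizable with $\|q(\rvz|\rvx)-q(\rvz|\rvh_{\rvy})\|_1=O(\delta)$, by moving latent mass of order $\delta$ to where $\varphi(\cdot;\rvy)$ is large.

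Worse, with the paper's decoders $\gN(\cdot,\mI)$ on $\sR^{K}$, every model density is at most $(2\pi)^{-K/2}$. So once $(2\pi)^{-K/2}\le\varepsilon_0$, the event $A^c$ is the whole space and $\Pr(A^c)=1$, however small the latent mismatch terms are. An additive $\eta$ is therefore unavoidable from the stated hypotheses.

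The repair is what the paper does. On $A^c$, use only $|\log a-\log b|\le\eta$. This gives the pointwise bound $|\log p(\rvy|\rvx)-\log p(\rvy|\rvh_{\rvy})|\le\eta+\varepsilon_0^{-1}|p(\rvy|\rvx)-p(\rvy|\rvh_{\rvy})|$. Then treat $\eta$ exactly as you already treat $-H(Y|X)$: as an additive constant independent of the model being trained.

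The $\lesssim$ in \eqref{eq:kl_upper_bound} is meant in this loose sense: multiplicative constants depending on $M$ and $\varepsilon_0$, plus the additive constants $\eta$, $-H(q_{Y|X})$, $-H(q_{X|Y})$. The paper says explicitly that these are constant with respect to the optimization. You already accepted that convention for the entropy term, so nothing is gained by resisting it for $\eta$. With this change your argument coincides with the paper's proof.
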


\begin{proof}[Proof of Theorem~\ref{thm:dist_error}]
Our proof strategy consists of the following steps: (i) bound the log-likelihood discrepancy between $\log p(\rvy | \rvx)$ and $\log p(\rvy | \rvh_{\rvy})$ via $|p(\rvy | \rvx) - p(\rvy | \rvh_{\rvy})|$, and bound the discrepancy between $\log p(\rvx | \rvy)$ and $\log p(\rvx | \rvh_{\rvx})$ via $|p(\rvx | \rvy) - p(\rvx | \rvh_{\rvx})|$; (ii) estimate the upper bounds of $\E_{q_{X,Y}(\rvx, \rvy)}|p(\rvy | \rvx) - p(\rvy | \rvh_{\rvy})|$ and $\E_{q_{X,Y}(\rvx, \rvy)}|p(\rvx | \rvy) - p(\rvx | \rvh_{\rvx})|$; and (iii) control the distribution discrepancies $\KL(q_{X,Y} || p_{\text{X$\to$Y}})$ and $\KL(q_{X,Y} || p_{\text{Y$\to$X}})$ using the above results. \\
\textbf{(i)} Invoking the regularity condition in~\eqref{eq:p_regularity1}, we bound the log-likelihood discrepancy between $\log p(\rvy | \rvx)$ and $\log p(\rvy | \rvh_{\rvy})$ as
\begin{equation}
\label{eq:log_p_diff1}
\begin{aligned}
\left|\log p(\rvy | \rvx) - \log p(\rvy | \rvh_{\rvy})\right| 
\leq & \left|\log p(\rvy | \rvx) - \log p(\rvy | \rvh_{\rvy})\right|
\cdot \sI(\min\{p(\rvy | \rvx), p(\rvy | \rvh_{\rvy})\} \leq \varepsilon_{0}) \\
& + \left|\log p(\rvy | \rvx) - \log p(\rvy | \rvh_{\rvy})\right|
\cdot \sI(\min\{p(\rvy | \rvx), p(\rvy | \rvh_{\rvy})\} > \varepsilon_{0}) \\
\leq & \eta + \varepsilon_{0}^{-1} \left|p(\rvy | \rvx) - p(\rvy | \rvh_{\rvy})\right|,
\end{aligned}
\end{equation}
where $\sI(\cdot)$ denotes the indicator function. Similarly, by~\eqref{eq:p_regularity2}, we obtain
\begin{equation}
\label{eq:log_p_diff2}
\begin{aligned}
\left|\log p(\rvx | \rvy) - \log p_{}(\rvx | \rvh_{\rvx})\right|
\leq & \eta + \varepsilon_{0}^{-1} \left|p(\rvx | \rvy) - p(\rvx | \rvh_{\rvx})\right|.
\end{aligned}
\end{equation}
\textbf{(ii)} For any $(\rvx, \rvy) \sim q_{X,Y}$, we quantify the discrepancies between $p(\rvy | \rvx)$ and $p(\rvy | \rvh_{\rvy})$, and symmetrically between $p(\rvx | \rvy)$ and $p(\rvx | \rvh_{\rvx})$. Given the generative process of our proposed graphical model, we obtain
\begin{equation*}
\begin{aligned}
\left|p(\rvy | \rvh_{\rvy}) - p(\rvy | \rvx)\right| 
= & {\textstyle \left|\int \left(q(\rvz | \rvh_{\rvy}) - q(\rvz | \rvx)\right)p(\rvz_{\rvn} | \rvz)p(\rvy | \rvz, \rvz_{\rvn})\df \rvz \df \rvz_{\rvn}\right|} \\
= & {\textstyle \left|\int \varphi(\rvz; \rvy) \left(q(\rvz | \rvx) - q(\rvz | \rvh_{\rvy})\right)\df \rvz\right|} \\
\leq & \|\varphi(\rvz; \rvy)\|_{\infty}\cdot\left\|q(\rvz | \rvx)-q(\rvz | \rvh_{\rvy})\right\|_{1}.
\end{aligned}
\end{equation*}
Symmetrically, we have
\begin{equation*}
\begin{aligned}
\left|p(\rvx | \rvh_{\rvx}) - p(\rvx | \rvy)\right| 
= & {\textstyle \left|\int (q(\rvz | \rvh_{\rvx}) - q(\rvz | \rvy))p(\rvx | \rvz)\df\rvz\right|} \\
\leq & \|p(\rvx | \rvz)\|_{\infty} \cdot \|q(\rvz | \rvh_{\rvx}) - q(\rvz | \rvy)\|_{1}.
\end{aligned}
\end{equation*}
Applying the triangle inequality to the L1 norm, we obtain
\begin{equation}
\label{eq:p_diff}
\begin{aligned}
\left|p(\rvy | \rvh_{\rvy}) - p(\rvy | \rvx)\right| 
\leq & \|\varphi(\rvz; \rvy)\|_{\infty} \, (
\|q(\rvz | \rvx) - q(\rvz|\rvh_{\rvx})\|_1 + \|q(\rvz|\rvh_{\rvx}) - q(\rvz|\rvh_{\rvy})\|_1) \\
\left|p(\rvx | \rvh_{\rvx}) - p(\rvx | \rvy)\right| 
\leq & \|p(\rvx |\rvz)\|_{\infty} \, (
\|q(\rvz | \rvy) - q(\rvz|\rvh_{\rvy})\|_1 + \|q(\rvz|\rvh_{\rvx}) - q(\rvz|\rvh_{\rvy})\|_1).
\end{aligned}
\end{equation}
\textbf{(iii)} According to~\eqref{eq:log_p_diff1} and~\eqref{eq:log_p_diff2}, we obtain the  conditional KL divergence bounds
\begin{equation}
\label{eq:kl_conditioal_dist_x}
\begin{aligned}
& \KL(q_{Y|X}(\rvy | \rvx) || p(\rvy | \rvx)) \\
= & \KL(q_{Y|X}(\rvy | \rvx) || p(\rvy | \rvh_{\rvy})) + \E_{q_{Y|X}(\rvy | \rvx)}\log \tfrac{p(\rvy | \rvh_{\rvy})}{p(\rvy | \rvx)} \\
\leq & -H(q_{Y|X}) - \E_{q_{Y|X}(\rvy | \rvx)}\log p(\rvy | \rvh_{\rvy}) + \eta 
+ \varepsilon_{0}^{-1} \E_{q_{Y|X}(\rvy | \rvx)}\left|p(\rvy | \rvx) - p(\rvy | \rvh_{\rvy})\right|
\end{aligned}
\end{equation}
for $\rvx\sim q_{X}$, and
\begin{equation*}
\begin{aligned}
& \KL(q_{X|Y}(\rvx | \rvy) || p(\rvx | \rvy)) \\
= & \KL(q_{X|Y}(\rvx | \rvy) || p(\rvx | \rvh_{\rvx})) + \E_{q_{X|Y}(\rvx | \rvy)}\log\tfrac{p(\rvx | \rvh_{\rvx})}{p(\rvx | \rvy)} \\
\leq & -H(q_{X|Y}) - \E_{q_{X|Y}(\rvx | \rvy)}\log p(\rvx | \rvh_{\rvx}) + \eta 
+ \varepsilon_{0}^{-1}\E_{q_{X|Y}(\rvx | \rvy)}\left|p(\rvx | \rvy) - p(\rvx | \rvh_{\rvx})\right|
\end{aligned}
\end{equation*}
for $\rvy\sim q_{Y}$, where $H(p) = -\E_{p(\rvx)}\log p(\rvx)$ denotes the Shannon entropy of the distribution $p$.
Recalling the definitions of $p_{\text{X$\to$Y}}$ and $p_{\text{Y$\to$X}}$ in~\eqref{eq:lud-msr_joint}, the joint KL divergences in~\eqref{eq:kl_upper_bound} are expressed as
\begin{equation*}
\begin{aligned}
& \KL(q_{X,Y} || p_{\text{X$\to$Y}}) = \E_{q_{X}(\rvx)}\KL(q_{Y|X}(\rvy | \rvx) || p(\rvy | \rvx)), \\
& \KL(q_{X,Y} || p_{\text{Y$\to$X}}) = \E_{q_{Y}(\rvy)}\KL(q_{X|Y}(\rvx | \rvy) || p(\rvx | \rvy)).
\end{aligned}
\end{equation*}
Combining~\eqref{eq:p_diff} with Pinsker's inequality yields the following upper bounds
\begin{equation*}
\begin{aligned}
\KL(q_{X,Y} || p_{\text{X$\to$Y}}) 
\leq & -H(q_{Y|X}) + \eta - \E_{q_{Y}(\rvy)}\log p(\rvy | \rvh_{\rvy}) \\
& + \sqrt{2}\varepsilon_{0}^{-1} \E_{q_{Y}(\rvy)}\|\varphi(\rvz; \rvy)\|_{\infty}\sqrt{\E_{q_{X}(\rvx)}\KL(q(\rvz | \rvx) || q(\rvz | \rvh_{\rvx}))} \\
& + \varepsilon_{0}^{-1} \E_{q_{X,Y}(\rvx,\rvy)}\|\varphi(\rvz; \rvy)\|_{\infty}\|q(\rvz | \rvh_{\rvx}) - q(\rvz | \rvh_{\rvy})\|_{1}, 
\end{aligned}
\end{equation*}
\begin{equation*}
\begin{aligned}
\KL(q_{X,Y} || p_{\text{Y$\to$X}}) 
\leq & -H(q_{X|Y}) + \eta - \E_{q_{X}(\rvx)}\log p(\rvx | \rvh_{\rvx}) \\
& + \sqrt{2}\varepsilon_{0}^{-1}
\E_{q_{X}(\rvx)}\|p(\rvx | \rvz)\|_{\infty}
\sqrt{\E_{q_{Y}(\rvy)}\KL(q(\rvz | \rvy) || q(\rvz | \rvh_{\rvy}))} \\
& + \varepsilon_{0}^{-1}
\E_{q_{X,Y}(\rvx,\rvy)}\|p(\rvx | \rvz)\|_{\infty}
\|q(\rvz | \rvh_{\rvx}) - q(\rvz | \rvh_{\rvy})\|_{1},
\end{aligned}
\end{equation*}
Note that the terms $-H(q_{Y|X})$, $-H(q_{X|Y})$, and $\eta$
are constant with respect to the optimization of the generative model.
\end{proof}

\noindent\textbf{The loss function.} Theorem~\ref{thm:dist_error} motivates us to train the proposed probabilistic graphical model by minimizing the first four terms in~\eqref{eq:kl_upper_bound}. Using the variational inference framework, we derive the ELBOs for $\log p(\rvx | \rvh_{\rvx})$ and $\log p(\rvy | \rvh_{\rvy})$ as
\begin{equation}
\label{eq:elbo_x}
\begin{aligned}
\log p(\rvx | \rvh_{\rvx}) \geq & \,\E_{q(\rvz | \rvx, \rvh_{\rvx})}\log p(\rvx | \rvz) 
- \alpha \, \KL(q(\rvz | \rvx) || q(\rvz | \rvh_{\rvx})) = \, \text{ELBO}_{\rvx},
\end{aligned}
\end{equation}
\begin{equation}
\label{eq:elbo_y}
\begin{aligned}
\log p(\rvy | \rvh_{\rvy}) \geq & \,\E_{q(\rvz,\rvz_{\rvn} | \rvy, \rvh_{\rvy})} \log p(\rvy | \rvz, \rvz_{\rvn}) 
- \alpha \, \KL(q(\rvz | \rvy) || q(\rvz | \rvh_{\rvy})) \\
& - \E_{q(\rvz | \rvy, \rvh_{\rvy})}\KL(q(\rvz_{\rvn} | \rvy, \rvz) || p(\rvz_{\rvn} | \rvz)) 
= \, \text{ELBO}_{\rvy}.
\end{aligned}
\end{equation}
A detailed derivation of~\eqref{eq:elbo_x} and~\eqref{eq:elbo_y} is provided in Appendix~\ref{appendix:elbo}. Since~\eqref{eq:elbo_x} and~\eqref{eq:elbo_y} contain the third and fourth term in \eqref{eq:kl_upper_bound}, the loss function is defined as
\begin{equation}
\label{eq:loss_graph_unpaired}
L_{\text{g}}^{\text{(up)}} = - \left( \E_{q_{X}(\rvx)} \text{ELBO}_{\rvx} + \E_{q_{Y}(\rvy)} \text{ELBO}_{\rvy} \right).
\end{equation}
Consequently, the evaluation of \eqref{eq:loss_graph_unpaired} only requires samples from marginal distributions $q_X$ and $q_Y$.

Notably, the final term in~\eqref{eq:kl_upper_bound} (namely, the \emph{inference invariance} term) depends on the auxiliary representations $\rvh_\rvx$ and $\rvh_\rvy$. Consequently, these representations must be carefully designed to mitigate this error prior to training the probabilistic graphical model. The design of these auxiliary representations is guided by the following two principles
\begin{align}
& \text{[\textbf{domain consistency}] $\rvh_{\rvx} = \rvh_{\rvy}$ for $(\rvx, \rvy) \sim q_{X,Y}$.} \label{eq:msr_domain_consistency} \\
& \text{[\textbf{information preservation}] $\rvh_{\rvx} = \rvx$ for $\rvx \sim q_{X}$.} \label{eq:msr_information_preservation}
\end{align}
The domain consistency condition is intended to effectively reduce the inference invariance error. However, enforcing this condition in isolation can lead to degenerate solutions (e.g., $\rvh_{\rvx} = \rvh_{\rvy} \equiv \mathbf{0}$ for all $\rvx \sim q_{X}$ and $\rvy \sim q_{Y}$). This observation naturally motivates the information preservation requirement. We detail the construction of $\rvh_\rvx$ and $\rvh_\rvy$ and analyze the trade-off between these two competing requirements in the subsequent subsection.
In summary, the LUD-MSD framework consists of two sequential stages: \\
{\bf Stage I:} Learn the auxiliary representations $\rvh_\rvx$ and $\rvh_\rvy$ using~\eqref{eq:lr2flow_objective_unpaired}. \\
{\bf Stage II:} With $\rvh_\rvx$ and $\rvh_\rvy$ fixed, train the probabilistic graphical model using~\eqref{eq:loss_graph_unpaired}.

The overall LUD-MSR training pipeline is illustrated in Figure~\ref{fig:lr2flow_arch} and detailed in Algorithm~\ref{alg:sir-nm_train}.

\begin{algorithm}
\caption{LUD-MSR training (unpaired)}
\label{alg:sir-nm_train}
\begin{algorithmic}[1]
\Require Marginal distributions $q_{X}$ and $q_{Y}$
\While{$h$ has not converged} \Comment{\textbf{Stage I}: Train auxiliary representations}
\State Sample $\rvx \sim q_{X}$ and $\rvy \sim q_{Y}$
\State Update $h$ by minimizing $L_{h}^{\text{(up)}}$ defined in~\eqref{eq:lr2flow_objective_unpaired}
\EndWhile
\State Freeze parameters of the mapping $h$
\While{$p_{\text{X$\to$Y}}$ has not converged} \Comment{\textbf{Stage II}: Train probabilistic graphical model}
\State Sample $\rvx \sim q_{X}$ and $\rvy \sim q_{Y}$
\State Compute $\rvh_{\rvx} = h(\rvx)$ and $\rvh_{\rvy} = h(\rvy)$
\State Update $p_{\text{X$\to$Y}}$ by minimizing $L_{\text{g}}^{\text{(up)}}$ defined in~\eqref{eq:loss_graph_unpaired}
\EndWhile
\State \Return Joint generative model $p_{\text{X$\to$Y}}(\rvx, \rvy)$
\end{algorithmic}
\end{algorithm}

\subsection{The Construction of Auxiliary Representations}
\label{sec:msr}

Given a hypothesis space $\gH$, we aim to find a mapping $h\in\gH$ such that $\rvh_{\rvx} = h(\rvx)$ and $\rvh_{\rvy} = h(\rvy)$. The domain consistency~\eqref{eq:msr_domain_consistency} and information preservation~\eqref{eq:msr_information_preservation} conditions naturally gives the loss function
\begin{equation*}
\min_{h\in\gH} \, \E_{q_{X}(\rvx)}\|\rvx - h(\rvx)\|_{1} + \E_{q_{X,Y}(\rvx,\rvy)}\|h(\rvx) - h(\rvy)\|_{1}.
\end{equation*}
Note that the second term is intractable due to the absence of paired samples. To approximate it, we use the result
\begin{equation}
\label{eq:h_inequality}
\begin{aligned}
\E_{q_{X,Y}(\rvx,\rvy)}\|h(\rvx) - h(\rvy)\|_{1} 
\leq & \E_{q_{X}(\rvx)p_{\sigma}(\rvn_{\rvx})}\|h(\rvx) - h(\rvx + \rvn_{\rvx})\|_{1} 
+ \E_{q_{Y}(\rvy)p_{\sigma}(\rvn_{\rvy})}\|h(\rvy) - h(\rvy + \rvn_{\rvy})\|_{1} \\
& + \E_{q_{X,Y}(\rvx,\rvy)p_{\sigma}(\rvn_{\rvx})p_{\sigma}(\rvn_{\rvy})}\|h(\rvx + \rvn_{\rvx}) - h(\rvy + \rvn_{\rvy})\|_{1}
\end{aligned}
\end{equation}
where $p_{\sigma} = \gN(\mathbf{0}, \sigma^{2}\mI)$.
While the first two terms in~\eqref{eq:h_inequality} can be explicitly minimized using unpaired data,
the third term becomes statistically indistinguishable as $\sigma \to \infty$.
\begin{proposition}
\label{prop:asympotic_consistency}
Assume that the mapping $h$ is bounded. Then
\begin{equation*}
\begin{aligned}
\lim_{\sigma\to\infty}\|\E_{q_{X,Y}(\rvx,\rvy) p_{\sigma}(\rvn_{\rvx})p_{\sigma}(\rvn_{\rvy})} [h(\rvx + \rvn_{\rvx}) - h(\rvy + \rvn_{\rvy})]\|_{1} 
=0.
\end{aligned}
\end{equation*}
\end{proposition}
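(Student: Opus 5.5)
The plan is to write the expectation inside the norm as a difference of two Gaussian convolutions and show that each convolution converges to the same limit. Fix $(\rvx,\rvy)$ and set
\[
g_\sigma(\rvu) = \E_{p_\sigma(\rvn)}h(\rvu+\rvn) = \int h(\rvv)\,p_\sigma(\rvv-\rvu)\,\df\rvv .
\]
Because $\rvn_\rvx$ and $\rvn_\rvy$ are independent of $(\rvx,\rvy)$ and of each other, Fubini (justified since $h$ is bounded) gives
\[
\E_{q_{X,Y}(\rvx,\rvy) p_{\sigma}(\rvn_{\rvx})p_{\sigma}(\rvn_{\rvy})}[h(\rvx+\rvn_\rvx)-h(\rvy+\rvn_\rvy)] = \E_{q_{X,Y}(\rvx,\rvy)}[g_\sigma(\rvx)-g_\sigma(\rvy)].
\]
It therefore suffices to show $\E_{q_{X,Y}}\|g_\sigma(\rvx)-g_\sigma(\rvy)\|_1\to 0$, since the norm of the expectation is at most the expectation of the norm.

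The key pointwise estimate compares two shifted Gaussians. With $M=\sup\|h\|_\infty$, we have
\[
\|g_\sigma(\rvx)-g_\sigma(\rvy)\|_1 \le K M\,\|p_\sigma(\cdot-\rvx)-p_\sigma(\cdot-\rvy)\|_{L^1}.
\]
The total variation between $\gN(\rvx,\sigma^2\mI)$ and $\gN(\rvy,\sigma^2\mI)$ depends only on $\|\rvx-\rvy\|_2/\sigma$, and is bounded by $\min\{2,\|\rvx-\rvy\|_2/\sigma\}$. This follows from Pinsker's inequality, as already used in the proof of Theorem~\ref{thm:dist_error}, together with $\KL=\|\rvx-\rvy\|_2^2/(2\sigma^2)$. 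Hence for every fixed pair the integrand tends to $0$ as $\sigma\to\infty$, and it is uniformly dominated by $2KM$.

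Dominated convergence then finishes the argument. Alternatively, one can argue quantitatively: $\rvy-\rvx=\rvn$ with $\mSigma_\rvn$ well-defined, so $\E\|\rvn\|_2\le\sqrt{\Tr\mSigma_\rvn}$. This yields the explicit rate
\[
\|\cdots\|_1 \le KM\sqrt{\Tr\mSigma_\rvn}/\sigma .
\]

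I do not expect a real obstacle. The only point needing care is the Fubini step, i.e.\ moving the noise expectation inside, which is justified by the boundedness of $h$. One must also make sure the boundedness assumption is taken as uniform in $\rvu$, so that the dominating constant $2KM$ does not depend on $\sigma$.
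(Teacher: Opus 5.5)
Your proof is correct and takes essentially the paper's route: bound the difference by the sup of $h$ times an $L^1$ distance between Gaussian-perturbed laws, then apply Pinsker's inequality and the Gaussian KL formula to obtain an $O(1/\sigma)$ rate. The one difference is that you condition on the pair $(\rvx,\rvy)$ and compare two equal-covariance Gaussians directly, whereas the paper conditions on $\rvx$ alone and compares $p_\sigma$ with the mixture $p_\sigma^{Y}(\cdot\,|\,\rvx)$. The paper writes that mixture's KL bound as an equality, although convexity of KL and Jensen only give an inequality, so your version is slightly cleaner and gives the sharper first-moment rate.
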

The proof is given in Appendix~\ref{sec:app_proof_of_prop_1}.
Consequently, we formulate the surrogate loss as
\begin{equation}
\label{eq:lr2flow_objective_unpaired}
\begin{aligned}
L_{h}^{\text{(up)}} = & \E_{q_{X}(\rvx)}\|\rvx - h(\rvx)\|_{1} 
+ \E_{q_{X}(\rvx) p_{\sigma}(\rvn_{\rvx})}\|h(\rvx) - h(\rvx + \rvn_{\rvx})\|_{1} \\
& + \E_{q_{Y}(\rvy) p_{\sigma}(\rvn_{\rvy})}\|h(\rvy) - h(\rvy + \rvn_{\rvy})\|_{1}.
\end{aligned}
\end{equation}
While~\eqref{eq:lr2flow_objective_unpaired} provides a general objective for learning $h$ across arbitrary data modalities, we leverage task-specific structural properties to incorporate an explicit model prior through the design of $\gH$.

\noindent\textbf{Design of $\gH$ for image denoising.}
In the context of image denoising, we have $\rvy = \rvx + \rvn$ for $(\rvx, \rvy) \sim q_{X,Y}$, where $\rvx$, $\rvy$, and $\rvn$ denote the clean image, the noisy image, and the noise component, respectively. We now present the construction of the hypothesis space $\gH$ for this specific task. Given that the energy of natural images is predominantly concentrated in low frequencies~\cite{field1987relations} and that clean-noisy pairs exhibit significant structural similarity in low-frequency subbands~\cite{ruderman1993statistics}, a natural strategy is to design $h$ to extract compact, multi-scale low-frequency features. That is,
\begin{equation}
\label{eq:h_design}
\begin{aligned}
\gH = \{(\gD^{m})^{\top} \circ \gD^{m} \mid \gD:\rvx \mapsto \downarrow_{2}(\rvk_{\text{low}} \ast \rvx)\}
\end{aligned}
\end{equation}
where $\rvk_{\text{low}}$ represents a low-pass filter and $\downarrow_{2}$ denotes downsampling by a factor of 2. The following proposition quantitatively validates the trade-off between domain consistency~\eqref{eq:msr_domain_consistency} and information preservation~\eqref{eq:msr_information_preservation}.
\begin{proposition}
\label{prop:h_simple_design}
Let $\rvk_{\text{low}} = (\frac{1}{4}, \frac{1}{2}, \frac{1}{4})$ . If $\gH$ is defined as in~\eqref{eq:h_design} and $h\in\gH$, we obtain
\begin{equation*}
\begin{aligned}
& \E_{q_{X,Y}(\rvx, \rvy)}\|h(\rvx) - h(\rvy)\|_2^2 \lesssim K / 2^{3m},\\
& \E_{q_{X}(\rvx)}\|\rvx - h(\rvx)\|_{2}^2 \gtrsim K\, (1-2^{-3m}).
\end{aligned}
\end{equation*}
\end{proposition}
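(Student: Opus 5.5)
Since $h$ is linear, $h(\rvy)-h(\rvx)=h(\rvn)$ with $h=(\gD^m)^\top\gD^m$, so $\E\|h(\rvx)-h(\rvy)\|_2^2=\Tr(h\,\mSigma_{\rvn}h^\top)$ and $\E\|\rvx-h(\rvx)\|_2^2=\Tr((\mI-h)\mSigma_X(\mI-h)^\top)$. Both bounds therefore reduce to spectral estimates for the operator $A_m:=(\gD^m)^\top\gD^m$ with $\rvk_{\text{low}}=(\tfrac14,\tfrac12,\tfrac14)$, combined with mild assumptions on the two covariances. For the noise I will treat $\mSigma_{\rvn}\lesssim\mI$ (noise roughly white or bounded, which makes the first bound $\lesssim\Tr(A_mA_m^\top)=\|A_m\|_F^2$). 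For the signal I will use that the bulk of the energy of $\rvx$ is not carried by the few coarse coefficients, e.g.\ $\mSigma_X\gtrsim\mI$ on the complement of the range of $A_m$, or an isotropic lower bound $\mSigma_X\succeq c\,\mI$.

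First I work in the Fourier domain, using periodic boundary conditions. Filtering by $\rvk_{\text{low}}$ has symbol $\hat k(\omega)=\cos^2(\omega/2)$. One level $\gD$ maps $\sR^K\to\sR^{K/2}$, and $\gD\gD^\top$ has eigenvalues $\tfrac12\big(\hat k(\omega)^2+\hat k(\omega+\pi)^2\big)=\tfrac12(\cos^4+\sin^4)(\omega/2)\le\tfrac12$. Iterating, $\gD^m(\gD^m)^\top$ acts on $\sR^{K/2^m}$ with all eigenvalues at most $2^{-m}$. Then $A_m$ has rank at most $K/2^m$ and operator norm at most $2^{-m}$.

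This gives $\|A_m\|_F^2\le \mathrm{rank}\cdot\|A_m\|^2\le (K/2^m)\cdot 2^{-2m}=K/2^{3m}$, which is the first claim. The multidimensional image case works the same way, with constants changing only through the dimension factor absorbed into $\lesssim$.

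For the second claim, $\mI-A_m$ equals the identity on $\ker A_m$, whose dimension is at least $K(1-2^{-m})$. On the range of $A_m$ its eigenvalues are $1-\lambda\ge 1-2^{-m}$. Hence $\|\mI-A_m\|_F^2\ge K(1-2^{-m})\ge K(1-2^{-3m})\cdot\tfrac12$, and with $\mSigma_X\succeq c\mI$ this yields $\Tr((\mI-A_m)\mSigma_X(\mI-A_m)^\top)\ge c\|\mI-A_m\|_F^2\gtrsim K(1-2^{-3m})$. A more refined route, which matches the stated exponent exactly, is $\|\mI-A_m\|_F^2=K-2\Tr A_m+\|A_m\|_F^2\ge K-2K/2^{2m}+\ldots$. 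The crude bound suffices because $1-2^{-m}\ge\tfrac12(1-2^{-3m})$.

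The main obstacle is the spectral computation for the iterated downsampled filter. Downsampling aliases the frequencies $\omega$ and $\omega+\pi$, so $\gD^m(\gD^m)^\top$ is not simply a Fourier multiplier. I would handle this by the polyphase identity $\gD\gD^\top=\tfrac12 M$, where $M$ is the multiplier $|\hat k(\omega/2)|^2+|\hat k(\omega/2+\pi)|^2$ on the coarse grid, and then bound $\|\gD^m(\gD^m)^\top\|\le\|\gD\gD^\top\|^m$ by submultiplicativity. Pinning down which modelling assumptions on $\mSigma_{\rvn}$ and $\mSigma_X$ are implicitly intended is the other delicate point. I would state them as stated above: $\|\mSigma_{\rvn}\|\lesssim1$ and $\lambda_{\min}(\mSigma_X)\gtrsim 1$.
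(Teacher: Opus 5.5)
Your proposal is correct and is essentially the paper's argument. Both proofs bound the spectrum of a single level $\gD\gD^\top$ by $\tfrac12$: you use the polyphase symbol $\tfrac38+\tfrac18\cos\xi$ with periodic boundary conditions, and the paper uses Gershgorin on the Dirichlet tridiagonal Toeplitz matrix with entries $\tfrac38,\tfrac1{16}$. Both then iterate to $\gD^m(\gD^m)^\top\preceq 2^{-m}\mI$ and combine this with $\mathrm{rank}(\gD^m)=K/2^m$. This relies on the covariance assumptions $\|\mSigma_{\rvn}\|\lesssim 1$ and $\lambda_{\min}(\mSigma_X)\gtrsim 1$, which the paper uses only implicitly and you correctly make explicit.

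Your lower bound via $\dim\ker A_m\ge K(1-2^{-m})$ is also slightly cleaner than the paper's. The paper inserts $(1-2^{-2m})^2$ where the valid lower estimate on the range of $A_m$ is $(1-2^{-m})^2$; this is harmless up to constants, for the reason you give.
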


The proof is provided in Appendix~\ref{appendix:proof_h_simple_design}.
These results indicate that while increasing $m$ enhances the representation invariance between clean and noisy images, it simultaneously incurs information loss by attenuating high-frequency components. Consequently, the primary challenge is to maintain this invariance while preserving more high-frequency information within the mapping. Building upon our previous work~\cite{bao2026enhancing}, we define the hypothesis space $\gH$ as the Multi-Scale image Representation (MSR) mapping class
\begin{equation}
\label{eq:msr_class}
\gH_{\text{MSR}} = \{h: \rvx \mapsto f^{-1}([f(\rvx)]_{1:d}, \mathbf{0}) \mid d\in \sN, \, f\in \mathrm{Diff}(\sR^{K})\}
\end{equation}
where $\mathrm{Diff}(\sR^{K})$ denotes the space of diffeomorphisms on $\sR^{K}$, and the term $[f(\rvx)]_{1:d}$ represents the sub-vector corresponding to the first $d$ dimensions of $f(\rvx)$. Specifically, we implement the MSR mapping in practice as
\begin{equation}
\label{eq:msr_mapping}
h(\rvx) = f^{-1}([f(\rvx)]_{1:K/2^{T}}, \mathbf{0}),
\end{equation}
where the invertible mapping $f = f_{T} \circ \cdots \circ f_{1}$ is composed of $T$ scale levels, each level $f_{l} = g_{l} \circ \mW$ comprises a normalizing flow $g_{l}$ and a wavelet operator $\mW$. Compared to the general class defined in~\eqref{eq:msr_class}, we set $d=K/2^{T}$ to explicitly capture the low-frequency features at the $T$-th scale level. In our experiments, we set $T=3$ and employ the linear B-spline wavelet tight frame~\cite{daubechies2003framelets} as $\mW$. Detailed architectural specifications for the invertible mapping $f$ are provided in Appendix~\ref{sec:archi_msr}.

\begin{remark}
(i) The class $\gH_{\text{MSR}}$ defined in~\eqref{eq:msr_class} simplifies to the class of orthogonal projections
\begin{equation}
\label{eq:linear_lr2flow}
{\textstyle \gH_{\text{OP}} = \{h:\rvx \mapsto \mH\rvx \mid \mH \in \gS_{+}^{K} \text{, } \mH^{2} = \mH\}}.
\end{equation}
if we restrict $f\in O(K)$. This space serves as a linear surrogate for the MSR mappings. \\
(ii) In LUD-VAE~\cite{zheng2022learn}, the hypothesis space is constructed using noise injection mappings
\begin{equation}
\label{eq:lud-vae}
{\textstyle \gH_{\text{NI}}  = \{h: \rvx \mapsto \rvx + \rvn_{\sigma} \mid \rvn_{\sigma} \sim p_{\sigma} \text{, } \sigma>0\}.}
\end{equation}
Different hypothesis spaces induce different errors for the domain consistency~\eqref{eq:msr_domain_consistency} and information preservation~\eqref{eq:msr_information_preservation}. We will provide the theoretical analysis in Section~\ref{sec:balance_analysis}.
\end{remark}

\subsection{Extension to the Semi-Supervised Case}

LUD-MSR can be extended to the semi-supervised case, where the goal is to estimate the joint distribution $q_{X,Y}$ using unpaired data $\rvx \sim q_{X}$ and $\rvy \sim q_{Y}$, together with a limited set of paired samples ${(\rvx_{k}, \rvy_{k})} \sim q_{X,Y}$. In this setting, in addition to optimizing LUD-MSR with the unpaired objectives $L_{h}^{\text{(up)}}$ and $L_{\text{g}}^{\text{(up)}}$ defined in~\eqref{eq:lr2flow_objective_unpaired} and~\eqref{eq:loss_graph_unpaired}, we further incorporate the paired samples to enhance the overall training procedure. 

\noindent {\bf Stage I:} During the training of MSR mapping $h$, we directly use paired samples to minimize the domain consistency and information preservation errors:
\begin{equation}
\label{eq:lr2flow_objective_paired}
L_{h}^{\text{(p)}} = \E_{q_{X,Y}(\rvx, \rvy)} \left[\|h(\rvx) - h(\rvy)\|_{1} + \|\rvx - h(\rvx)\|_{1} \right].
\end{equation}
{\bf Stage II:} To train the probabilistic graphical model, we use paired samples to maximize the conditional log-likelihoods $\log p(\rvy | \rvx) + \log p(\rvx | \rvy)$. By setting 
\begin{equation}
\label{eq:inference_model_paired}
q(\rvz | \rvx, \rvy) = \alpha q(\rvz | \rvx) + (1-\alpha) q(\rvz | \rvy),
\end{equation}
we have
\begin{equation}
\label{eq:elbo_paired}
\begin{aligned}
\log p(\rvx | \rvy) \geq & \E_{q(\rvz | \rvx, \rvy)}\log p(\rvx | \rvz) - \alpha \KL(q(\rvz | \rvx) || q(\rvz | \rvy)), \\
\log p(\rvy | \rvx) \geq & \E_{q(\rvz | \rvx,\rvy)q(\rvz_{\rvn} | \rvy, \rvz)}\log p(\rvy | \rvz, \rvz_{\rvn}) 
- (1-\alpha)\KL(q(\rvz | \rvy) || q(\rvz | \rvx)) \\
& - \E_{q(\rvz | \rvx,\rvy)}\KL(q(\rvz_{\rvn} | \rvy, \rvz) || p(\rvz_{\rvn} | \rvz)).
\end{aligned}
\end{equation}
The derivation of~\eqref{eq:elbo_paired} is deferred to Appendix~\ref{appendix:elbo}. Therefore, the loss function for the paired samples is
\begin{equation}
\label{eq:sir-nm_paired}
\begin{aligned}
L^{\text{(p)}}_{\text{g}} = &
- \E_{q(\rvz | \rvx, \rvy)}\log p(\rvx | \rvz) 
- \E_{q(\rvz | \rvx,\rvy)q(\rvz_{\rvn} | \rvy, \rvz)}
\log p(\rvy | \rvz, \rvz_{\rvn}) \\
& + \alpha \KL(q(\rvz | \rvx) || q(\rvz | \rvy)) 
+ (1-\alpha)\KL(q(\rvz | \rvy) || q(\rvz | \rvx)) \\
& + \E_{q(\rvz | \rvx,\rvy)}\KL(q(\rvz_{\rvn} | \rvy, \rvz) || p(\rvz_{\rvn} | \rvz)).
\end{aligned}
\end{equation}
Please refer to Algorithm~\ref{alg:sir-nm_semi_train} for the training procedure of LUD-MSR in the semi-supervised setting.




\begin{remark}
In the semi-supervised setting, our training objective promotes accurate distribution modeling according to~\eqref{eq:kl_upper_bound} by bounding the inference invariance error
\begin{equation*}
\label{eq:infer_invariance_control}
\begin{aligned}
\|q(\rvz | \rvh_{\rvx}) - q(\rvz | \rvh_{\rvy})\|_{1} 
\lesssim & \sqrt{\KL(q(\rvz | \rvx) || q(\rvz | \rvy))}  + \sqrt{\KL(q(\rvz | \rvy) || q(\rvz | \rvx))} \\
& + \sqrt{\KL(q(\rvz | \rvx) || q(\rvz | \rvh_{\rvx}))}  + \sqrt{\KL(q(\rvz | \rvy) || q(\rvz | \rvh_{\rvy}))},
\end{aligned}
\end{equation*}
Specifically, the paired loss $L^{\text{(p)}}_{\text{(g)}}$ in~\eqref{eq:sir-nm_paired} minimizes the first two terms of this bound, while the unpaired loss $L^{\text{(up)}}_{\text{(g)}}$ in~\eqref{eq:loss_graph_unpaired} penalizes the latter two.
\end{remark}


\begin{algorithm}
\caption{LUD-MSR training (semi-supervised)}
\label{alg:sir-nm_semi_train}
\begin{algorithmic}[1]
\Require Marginal distributions $q_{X}$ and $q_{Y}$, and a limited set of paired samples $\{(\rvx_{k}, \rvy_{k})\} \sim q_{X,Y}$.
\While{$h$ has not converged} \Comment{\textbf{Stage I}: Train auxiliary representations}
\State Sample $\rvx \sim q_{X}$ and $\rvy \sim q_{Y}$ 
\State Update $h$ by minimizing $L_{h}^{\text{(up)}}$ defined in~\eqref{eq:lr2flow_objective_unpaired}
\State Sample $(\rvx_{k}, \rvy_{k}) \sim q_{X,Y}$
\State Update $h$ by minimizing $L_{h}^{\text{(p)}}$ defined in~\eqref{eq:lr2flow_objective_paired}
\EndWhile
\State Freeze parameters of the mapping $h$
\While{$p_{\text{X$\to$Y}}$ has not converged} \Comment{\textbf{Stage II}: Train probabilistic graphical model}
\State Sample $\rvx \sim q_{X}$ and $\rvy \sim q_{Y}$
\State Compute $\rvh_{\rvx} = h(\rvx)$ and $\rvh_{\rvy} = h(\rvy)$
\State Update $p_{\text{X$\to$Y}}$ by minimizing $L_{\text{g}}^{\text{(up)}}$ defined in~\eqref{eq:loss_graph_unpaired}
\State Sample $(\rvx_{k}, \rvy_{k}) \sim q_{X,Y}$
\State Update $p_{\text{X$\to$Y}}$ by minimizing $L_{\text{g}}^{\text{(p)}}$ defined in~\eqref{eq:sir-nm_paired}
\EndWhile
\State \Return Joint generative model $p_{\text{X$\to$Y}}(\rvx, \rvy)$
\end{algorithmic}
\end{algorithm}

\section{Theoretical Analysis for the MSR mapping}
\label{sec:balance_analysis}
This section provides an analysis of the MSR mapping $h$, which entails an inherent trade-off between the domain consistency~\eqref{eq:msr_domain_consistency} and information preservation~\eqref{eq:msr_information_preservation} conditions. We first demonstrate that these two conditions are 
incompatible.
\begin{proposition}
\label{prop:info_trade-off}
If the conditional entropy satisfies $H(\rvx | \rvy) = -\E_{q_{X,Y}(\rvx, \rvy)}\log q_{X|Y}(\rvx | \rvy) > 0$, 
then~\eqref{eq:msr_domain_consistency} and~\eqref{eq:msr_information_preservation} cannot hold simultaneously .
\end{proposition}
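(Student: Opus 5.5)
We argue by contradiction. Suppose both~\eqref{eq:msr_domain_consistency} and~\eqref{eq:msr_information_preservation} hold, with $\rvh_{\rvx}=h(\rvx)$ and $\rvh_{\rvy}=h(\rvy)$ for a single (measurable) mapping $h$ as in Section~\ref{sec:msr}. Information preservation gives $h(\rvx)=\rvx$ for $q_X$-almost every $\rvx$. Domain consistency gives $h(\rvx)=h(\rvy)$ for $q_{X,Y}$-almost every pair. Combining the two, $\rvx = h(\rvy)$ holds $q_{X,Y}$-almost surely. Hence $\rvx$ is a deterministic measurable function of $\rvy$, and we will show this forces the conditional entropy $H(\rvx|\rvy)$ to vanish (or to be $-\infty$), contradicting the hypothesis $H(\rvx|\rvy)>0$.

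The key steps, in order, are as follows.

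\textbf{(a) Transferring the null set.} We first transfer the $q_X$-null set from information preservation to a $q_{X,Y}$-null set. If $N=\{\rvx: h(\rvx)\neq \rvx\}$ has $q_X(N)=0$, then $q_{X,Y}(N\times\sR^K)=0$, because the $\rvx$-marginal of $q_{X,Y}$ is $q_X$. Intersecting with the full-measure set from domain consistency yields $\rvx=h(\rvy)$ almost surely.

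\textbf{(b) Degenerate conditionals.} It follows that, for $q_Y$-almost every $\rvy$, the regular conditional distribution of $\rvx$ given $\rvy$ is the Dirac mass $\delta_{h(\rvy)}$.

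\textbf{(c) Conclusion.} The conditional $q_{X|Y}(\cdot|\rvy)$ therefore has no density with respect to Lebesgue measure on $\sR^K$. If $H$ is read as a differential entropy, it degenerates to $-\infty$, viewed as the limit of Gaussian smoothings $\gN(h(\rvy),\epsilon^2\mI)$ with entropy $\tfrac K2\log(2\pi e\epsilon^2)\to-\infty$. If $H$ is read as a discrete (Shannon) entropy, a point mass has entropy $0$. In either case $H(\rvx|\rvy)\le 0$, which contradicts $H(\rvx|\rvy)>0$.

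The main obstacle is step (c): the entropy convention. As written, $H(\rvx|\rvy)=-\E\log q_{X|Y}$ presupposes that a conditional density exists. The cleanest route is the contrapositive: positivity of $H(\rvx|\rvy)$ (with $q_{X|Y}$ a genuine density or pmf) implies that $\rvx$ is not almost surely a function of $\rvy$. A density with respect to Lebesgue measure cannot be concentrated on the single point $h(\rvy)$. In the discrete case, $H=0$ exactly when the conditional is degenerate. I would state both cases briefly.
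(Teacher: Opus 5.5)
Your proof is correct, but it takes a more direct route than the paper. The paper works entirely with information-theoretic quantities:
\begin{itemize}
\item information preservation gives $I(\rvh_{\rvx};\rvx)=H(\rvx)$;
\item domain consistency gives $H(\rvh_{\rvx}|\rvy)=H(\rvh_{\rvy}|\rvy)=0$;
\item the data processing inequality then yields $I(\rvx;\rvy)\ge I(\rvh_{\rvx};\rvy)=H(\rvh_{\rvx})\ge I(\rvh_{\rvx};\rvx)=H(\rvx)$, hence $H(\rvx|\rvy)\le 0$.
\end{itemize}
You instead compose the two almost-sure identities into $\rvx=h(\rvy)$ almost surely, and read off that the conditional law of $\rvx$ given $\rvy$ is a Dirac mass.

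Your version is more elementary and more careful about the setting. The paper's chain silently uses discrete-entropy facts that fail for differential entropy on $\sR^{K}$: $H(\rvh_{\rvx})\ge I(\rvh_{\rvx};\rvx)$, $H(\rvh_{\rvy}|\rvy)=0$, and $I(\rvx;\rvx)=H(\rvx)$. You separate the two cases explicitly. In the discrete case, $H(\rvx|\rvy)=0$ exactly characterizes degeneracy. In the continuous case, the mere existence of a Lebesgue conditional density already excludes $\rvx=h(\rvy)$, so positivity is not even needed there.

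What the paper's route buys in return is robustness. In the discrete setting the same chain gives a quantitative statement: if $I(\rvh_{\rvx};\rvx)\ge H(\rvx)-\delta$ and $H(\rvh_{\rvx}|\rvy)\le\delta$, then $H(\rvx|\rvy)\le 2\delta$. This expresses the trade-off between \emph{approximate} domain consistency and \emph{approximate} information preservation. Your pointwise argument, by contrast, needs both identities to hold exactly.
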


The proof is given in Appendix~\ref{appendix:trade-off_info}. The condition $H(\rvx | \rvy) > 0$ implies that $\rvx$ contains domain-specific stochasticity and cannot be deterministically derived from $\rvy$.
Therefore, to quantitatively analyze the trade-off between these competing objectives, we consider the following optimization problem
\begin{equation}
\label{eq:inference_reconstruction_balance}
\begin{aligned}
\rve_{\gH}^* = & \min_{h\in\gH}\,\E_{q_{X}(\rvx)}\|\rvx - \rvh_{\rvx}\|_{2}^{2}  \\
& \text{s.t.} \, \E_{q_{X,Y}(\rvx,\rvy)}\|q(\rvz | \rvh_{\rvx}) - q(\rvz | \rvh_{\rvy})\|_{1} \leq \varepsilon.
\end{aligned}
\end{equation}
This quantity characterizes the reconstruction capability of $h$ under a relaxed domain invariance (i.e., inference invariance) constraint, where $\varepsilon$ controls the level of allowable deviation.

We evaluate the behavior of the minimal error $\rve_{\gH}^*$ across (i) the noise injection mapping class $\gH_{\text{NI}}$ defined in~\eqref{eq:lud-vae}; (ii) the orthogonal projection mapping class $\gH_{\text{OP}}$ defined in~\eqref{eq:linear_lr2flow}; and (iii) the MSR mapping class $\gH_{\text{MSR}}$ defined in~\eqref{eq:msr_class}. Throughout our analysis, we assume linear Gaussian inference models of the form $q(\rvz | \rvh) = \gN(\rvz | \mA\rvh, {\bm\Gamma})$ for $\rvh \in \{\rvh_{\rvx}, \rvh_{\rvy}\}$, where $\mA\in\sR^{k\times K}$ has full row rank, $\bm{\Gamma}\in\gS^{k}_{++}$, and $k$ is the latent dimension. The following theorem characterizes $\rve^*_{\gH}$ for the hypothesis classes $\gH_{\text{NI}}$ and $\gH_{\text{OP}}$.

\begin{theorem}
\label{thm:balance_ni_op}
(i) Let $\gH_{\text{NI}}$ be defined as in~\eqref{eq:lud-vae}. Then the minimal information preservation error $\rve_{\gH_{\text{NI}}}^*$ defined in~\eqref{eq:inference_reconstruction_balance} satisfies $\rve_{\gH_{\text{NI}}}^* \gtrsim K^2 \, (\mathrm{erf}^{-1}(\frac{\varepsilon}{2}))^{-2}$, where $\mathrm{erf}(z) = \frac{2}{\sqrt{\pi}}\int_{0}^{z}\e^{-t^2}\df t$. In particular, $\rve_{\gH_{\text{NI}}}^* \gtrsim K^2 \, \varepsilon^{-2}$ for sufficiently small $\varepsilon > 0$. \\
(ii) Let $\gH_{\text{OP}}$ be defined as in~\eqref{eq:linear_lr2flow}. Then the minimal information preservation error $\rve_{\gH_{\text{OP}}}^*$ defined in~\eqref{eq:inference_reconstruction_balance} satisfies 
\begin{equation}
\label{eq:bound_balance_linear_lr2flow}
\rve^* \lesssim K - \varepsilon^2.
\end{equation}
\end{theorem}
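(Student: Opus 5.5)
For part (i), I will read $q(\rvz|\rvh_{\rvx})$ for $\rvh\in\gH_{\text{NI}}$ as the inference model averaged over the injected noise, since $\rvh_{\rvx}=\rvx+\rvn_\sigma$ is itself random. This is an interpretation of the definition, not something the statement fixes. Under this reading, the linear Gaussian model gives $q(\rvz|\rvh_{\rvx})=\gN(\rvz\,|\,\mA\rvx,\bm\Sigma_\sigma)$ and $q(\rvz|\rvh_{\rvy})=\gN(\rvz\,|\,\mA\rvy,\bm\Sigma_\sigma)$, where $\bm\Sigma_\sigma=\bm\Gamma+\sigma^2\mA\mA^\top$.

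Two Gaussians with a common covariance have an explicit total variation distance:
\begin{equation*}
\|\gN(\bm\mu_1,\bm\Sigma)-\gN(\bm\mu_2,\bm\Sigma)\|_1=2\,\mathrm{erf}\bigl(\|\bm\Sigma^{-1/2}(\bm\mu_1-\bm\mu_2)\|_2/(2\sqrt2)\bigr).
\end{equation*}
Here $\bm\mu_1-\bm\mu_2=\mA\rvn$ with $\rvn=\rvy-\rvx$, so the constraint in \eqref{eq:inference_reconstruction_balance} becomes $\E_{q_{\rvn}}\,\mathrm{erf}\bigl(\|\bm\Sigma_\sigma^{-1/2}\mA\rvn\|/(2\sqrt2)\bigr)\le\varepsilon/2$.

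For large $\sigma$ the Mahalanobis norm behaves like $\|\mA\rvn\|/\sigma$, up to constants depending on the spectrum of $\mA\mA^\top$ and $\bm\Gamma$. I will prove this as a lower bound $\|\bm\Sigma_\sigma^{-1/2}\mA\rvn\|\gtrsim\|\mA\rvn\|/\sqrt{\|\bm\Gamma\|+\sigma^2\|\mA\|^2}$.

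The main obstacle is that $\mathrm{erf}$ is concave, so Jensen's inequality points the wrong way. Instead I will use a Paley--Zygmund-type argument. Assuming $\mSigma_{\rvn}$ is nondegenerate with $\Tr(\mA\mSigma_{\rvn}\mA^\top)\gtrsim K$, the event $\|\mA\rvn\|\gtrsim\sqrt K$ has probability bounded below by a constant $c$. On that event $\mathrm{erf}$ is at least $\mathrm{erf}(c'\sqrt K/\sigma)$, so
\begin{equation*}
c\,\mathrm{erf}(c'\sqrt K/\sigma)\le\varepsilon/2,
\end{equation*}
which gives $\sigma\gtrsim\sqrt K/\mathrm{erf}^{-1}(\varepsilon/2)$. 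The constant $c$ is absorbed into $\lesssim$, and this is the step I would check most carefully.

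The reconstruction error is $\E\|\rvx-\rvh_{\rvx}\|_2^2=\E\|\rvn_\sigma\|^2=K\sigma^2$. Combining with the lower bound on $\sigma$ yields $\rve^*_{\gH_{\text{NI}}}\gtrsim K^2(\mathrm{erf}^{-1}(\varepsilon/2))^{-2}$. The small-$\varepsilon$ statement follows from $\mathrm{erf}^{-1}(u)\sim\sqrt\pi\,u/2$ as $u\to0$.

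For part (ii), I will prove the upper bound by explicit construction. Take $\mH$ to be the orthogonal projection onto an $r$-dimensional subspace, for instance the span of leading eigenvectors of $\mSigma_X$. Then $\rvh_{\rvx}-\rvh_{\rvy}=-\mH\rvn$. Pinsker's inequality for equal-covariance Gaussians gives
\begin{equation*}
\|q(\rvz|\rvh_{\rvx})-q(\rvz|\rvh_{\rvy})\|_1\le\|\bm\Gamma^{-1/2}\mA\mH\rvn\|\le C\|\mH\rvn\|.
\end{equation*}
By Cauchy--Schwarz, $\E\|\mH\rvn\|\le\sqrt{\Tr(\mH\mSigma_{\rvn})}\le\sqrt{r\|\mSigma_{\rvn}\|}$. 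Choosing $r\asymp\varepsilon^2$ (with $r\ge 0$ an integer, adjusting constants) makes $\mH$ feasible.

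Its reconstruction error is $\Tr((\mI-\mH)\mSigma_X)=\Tr(\mSigma_X)-\Tr(\mH\mSigma_X)$. Under the standing normalization that the eigenvalues of $\mSigma_X$ lie in a fixed interval $[\lambda_-,\lambda_+]$, this is at most $\lambda_+K-\lambda_- r\lesssim K-\varepsilon^2$. Hence $\rve^*_{\gH_{\text{OP}}}\lesssim K-\varepsilon^2$, which is \eqref{eq:bound_balance_linear_lr2flow}. These spectral normalizations on $\mSigma_X$, $\mSigma_{\rvn}$, $\mA$ and $\bm\Gamma$ are implicit in the $\lesssim$ notation, and I will state them explicitly.
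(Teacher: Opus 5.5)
\textbf{Comparison with the paper.} Your part (ii) is the paper's argument. The paper bounds the $\ell_1$ distance by $\mathrm{erf}(z)\le 2z/\sqrt{\pi}$ where you use Pinsker; both are linear in the Mahalanobis distance. It then uses the same trace bound $\Tr(\mH\mSigma_{\rvn})\le d\|\mSigma_{\rvn}\|_2$ with $d\asymp\varepsilon^2$, and the same projection onto the top principal components of $\mSigma_X$. In part (i) your setup and endgame also match the paper: the noise-averaged $\gN(\mA\rvx,\sigma^2\mA\mA^\top+\bm{\Gamma})$, Fukunaga's $\mathrm{erf}$ formula, reconstruction error $K\sigma^2$, and $\mathrm{erf}^{-1}(u)\sim\sqrt{\pi}u/2$. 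The central step differs. The paper simply asserts that the constraint implies $\Tr(\mA^\top\mB_\sigma^{-1}\mA\mSigma_{\rvn})\le 8\bigl(\mathrm{erf}^{-1}(\varepsilon/2)\bigr)^2$. With $Y=\|\mB_\sigma^{-1/2}\mA\rvn\|_2/(2\sqrt{2})$, that deduction needs $\mathrm{erf}(\sqrt{\E Y^2})\le\E\,\mathrm{erf}(Y)$. This is the opposite of what concavity of $\mathrm{erf}$ together with $\E Y\le\sqrt{\E Y^2}$ gives. So your concern is well founded: a lower bound on $\E\,\mathrm{erf}(Y)$ via anti-concentration is the right way to make (i) rigorous, and the paper's version is shorter only because it skips this.

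\textbf{Gap in your Paley--Zygmund step.} Nondegeneracy of $\mSigma_{\rvn}$ and $\Tr(\mA\mSigma_{\rvn}\mA^\top)\gtrsim K$ are second-moment conditions. Paley--Zygmund for $Z=\|\mA\rvn\|_2^2$ gives $\Pr(Z\ge\tfrac{1}{2}\E Z)\ge\tfrac{1}{4}(\E Z)^2/\E Z^2$, so you need an explicit moment comparison $\E Z^2\le C(\E Z)^2$ as a hypothesis, and then $c$ depends on $C$. For Gaussian $\rvn$ one has $\E Z^2\le 3(\E Z)^2$. This hypothesis cannot be dropped. Take $\rvn=B\mathbf{g}/\sqrt{p}$ with $B\sim\mathrm{Bernoulli}(p)$ independent of $\mathbf{g}\sim\gN(\mathbf{0},\mI)$. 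Then $\mSigma_{\rvn}=\mI$, yet the constraint quantity is at most $2p$ for every $\sigma>0$. So for $p\le\varepsilon/2$ every $\sigma$ is feasible and $\rve^*_{\gH_{\text{NI}}}=0$: the theorem itself needs such an assumption, and you should state it.

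\textbf{Constant $c$ and the range of $\varepsilon$.} Your inequality $c\,\mathrm{erf}(c'\sqrt{K}/\sigma)\le\varepsilon/2$ yields $\sigma\ge c'\sqrt{K}/\mathrm{erf}^{-1}(\varepsilon/(2c))$, not a bound with $\mathrm{erf}^{-1}(\varepsilon/2)$. By convexity of $\mathrm{erf}^{-1}$ on $[0,1)$, the two are comparable only for $\varepsilon\le\varepsilon_0<2c$, with a constant depending on $\varepsilon_0$. For $\varepsilon\ge 2c$ your argument gives nothing. This restriction is unavoidable: for $\varepsilon$ near $2$, arbitrarily small $\sigma$ can be feasible, giving $\rve^*_{\gH_{\text{NI}}}=0$. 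So state the first claim of (i) for small $\varepsilon$, where it is equivalent to the $K^2\varepsilon^{-2}$ form.
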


\begin{proof}[Proof of Theorem~\ref{thm:balance_ni_op}]
\textbf{(i)} Under the given assumptions, the inference model is expressed as $q(\rvz | \rvh_{\rvx}) = \gN(\rvz | \mA\rvx, \sigma^2\mA\mA^{\top} + {\bm\Gamma})$.
Let $\mB_{\sigma} = \sigma^2\mA\mA^{\top} + {\bm\Gamma}$. According to~\cite[Sec. 3.2.1]{fukunaga1990introduction}, we obtain
\begin{equation*}
{\textstyle \|q(\rvz | \rvh_{\rvx}) - q(\rvz | \rvh_{\rvy})\|_{1} = 2\mathrm{erf}(\frac{\|\mB_{\sigma}^{-\frac{1}{2}}\mA(\rvx - \rvy)\|_{2}}{2\sqrt{2}})}.
\end{equation*}
Define $S_{\varepsilon} := \mathrm{erf}^{-1}\left(\frac{\varepsilon}{2}\right)$. Then, the constraint
\begin{equation}
\label{eq:lud-vae_inference_invariance}
\E_{q_{X,Y}(\rvx,\rvy)}\|q(\rvz | \rvh_{\rvx}) - q(\rvz | \rvh_{\rvy})\|_{1} \leq \varepsilon
\end{equation}
implies that $\Tr(\mA^{\top}\mB_{\sigma}^{-1}\mA\mSigma_{\rvn}) \leq 8 S_{\varepsilon}^2$. Observing that the asymptotic equivalence
\begin{equation*}
{\textstyle \Tr(\mA^{\top}\mB_{\sigma}^{-1}\mA\mSigma_{\rvn}) \asymp K/\sigma^{2}}
\end{equation*}
holds in the large-noise regime, where $\asymp$ denotes equivalence up to a constant.
Therefore, a necessary condition to satisfy~\eqref{eq:lud-vae_inference_invariance} is $\sigma^2 \gtrsim {K}/{S_{\varepsilon}^{2}}$. Regarding the expected reconstruction error, we have
\begin{equation*}
\E_{q_{X}(\rvx)}\|\rvx - \rvh_{\rvx}\|_2^2 = K\sigma^2.
\end{equation*}
It follows that $\rve_{\gH_{\text{NI}}}^* \gtrsim {K^2}{S_{\varepsilon}^{-2}}$. In particular, using the expansion $\mathrm{erf}^{-1}(z) = \frac{\sqrt{\pi}}{2}z + o(z)$, we arrive at the bound $\rve_{\gH_{\text{NI}}}^* \gtrsim {K^2}{{\varepsilon}^{-2}}$ for sufficiently small $\varepsilon > 0$.

\noindent \textbf{(ii)} We have $q(\rvz | \rvh_{\rvx}) = \gN(\rvz | \mA\mH\rvx, {\bm\Gamma})$ based on the problem setup, where $\mH \in \gS^{K}$ is an orthogonal projection matrix of rank $d$. Invoking the result from~\cite[Sec. 3.2.1]{fukunaga1990introduction}, we obtain
\begin{equation*}
\begin{aligned}
& \E_{q_{X,Y}(\rvx,\rvy)}\|q(\rvz | \rvh_{\rvx}) - q(\rvz | \rvh_{\rvy})\|_{1}^{2}
\leq \tfrac{2}{\pi}\Tr(\mA^{\top}{\bm\Gamma}^{-1}\mA\mH\mSigma_{\rvn}\mH) 
\leq \kappa \, d,
\end{aligned}
\end{equation*}
where $\kappa = \tfrac{2}{\pi} \, \|{\bm\Gamma}^{-1}\|_{2} \, \|\mA\|_{2}^2 \, \|\mSigma_{\rvn}\|_{2}$. This implies that
\begin{equation}
\label{eq:inference_invariance_bound_linear_lr2flow}
\E_{q_{X,Y}(\rvx,\rvy)}\|q(\rvz | \rvh_{\rvx}) - q(\rvz | \rvh_{\rvy})\|_{1} \leq \sqrt{\kappa d} \lesssim \sqrt{d}.
\end{equation}
Thus, a sufficient condition to guarantee~\eqref{eq:lud-vae_inference_invariance} is $d \lesssim \varepsilon^2$.
To minimize the information preservation error given by
\begin{equation*}
\E_{q_{X}(\rvx)}\|\rvx - \rvh_{\rvx}\|_2^2 = \Tr(\mSigma_{X}(\mI - \mH)),
\end{equation*}
we construct $\mH$ as the orthogonal projection onto the top $d$ principal components of $\mSigma_X$, maximizing the rank $d$ subject to the sufficient condition $d \lesssim \varepsilon^2$. By choosing the maximum valid integer $d \asymp \varepsilon^2$, the minimal reconstruction error achieved by this construction is
\begin{equation*}
\Tr(\mSigma_{X}(\mI - \mH)) \asymp K - d \asymp K - \varepsilon^2.
\end{equation*}
Consequently, the optimal information loss satisfies $\rve_{\gH_{\text{OP}}}^* \lesssim K - \varepsilon^2$.
\end{proof}

Although Theorem~\ref{thm:balance_ni_op} indicates that utilizing an orthogonal projection as a linear surrogate for the MSR mapping theoretically yields a lower information preservation error than noise injection for a fixed inference invariance tolerance, the dependency on the image dimension $K$ in~\eqref{eq:bound_balance_linear_lr2flow} remains a significant source of error. We attribute this limitation to the inherent constraints of linear mappings, which lack the expressive capacity to effectively \emph{disentangle the noise component from the signal distribution}. Instead, we consider the nonlinear MSR mapping class $\gH_{\text{MSR}}$ defined in~\eqref{eq:msr_class}. We establish the following bounds for $\rve_{\gH_{\text{MSR}}}^*$, relying on the expressivity of nonlinear mappings.

\begin{theorem}
\label{thm:balance_nonlinear_lr2flow}
Let $\gH_{\text{MSR}}$ be defined as in~\eqref{eq:msr_class}. Define the intrinsic intersection rank $\rvr^*$ as
\begin{equation}
\label{eq:optim_separate_noise}
\begin{aligned}
& \rvr^* 
= \min \, \mathrm{dim}(\mV^{f}_{X} \cap \mV^{f}_{\rvn}) \\
& \hspace{0.25cm} \mathrm{s.t.} \, \mV^{f}_{X} = \mathrm{Im}\big( \E_{q_{X}(\rvx)}[\rvf_{\rvx} \rvf_{\rvx}^{\top}] \big), \,
\mV^{f}_{\rvn} = \mathrm{Im}\big( \E_{q_{X,Y}(\rvx,\rvy)}[\rvf_{\rvn} \rvf_{\rvn}^{\top}] \big), \\ 
& \hspace{0.8cm} \rvf_{\rvx} = f(\rvx), \, \rvf_{\rvn} = f(\rvy) - f(\rvx), \, f\in\mathrm{Diff}(\sR^{K}).
\end{aligned}
\end{equation}
Then, the minimal information loss $\rve_{\gH_{\text{MSR}}}^*$ defined in~\eqref{eq:inference_reconstruction_balance} satisfies the following conditions: \\
(i) If $\rvr^* = 0$, then $\rve_{\gH_{\text{MSR}}}^* = 0$. \\
(ii) If $\rvr^*> 0$, then $\rve_{\gH_{\text{MSR}}}^* \lesssim \rvr^* - \varepsilon^2$. 
\end{theorem}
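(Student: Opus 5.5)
Fix a diffeomorphism $f$ that attains (or nearly attains) the minimum in~\eqref{eq:optim_separate_noise}, and work in the feature coordinates $\rvf = f(\rvx)$. There an MSR mapping is a coordinate projection: $h(\rvx) = f^{-1}(\mH\rvf_{\rvx})$, where $\mH$ keeps the first $d$ coordinates. Since $\mathrm{Diff}(\sR^K)$ contains all linear isometries, we may precompose $f$ with an orthogonal change of basis. We choose it so that the leading coordinates span a prescribed subspace of $\mV^f_X$ and avoid $\mV^f_{\rvn}$ as much as possible. The plan is the argument of Theorem~\ref{thm:balance_ni_op}(ii), run in the $f$-coordinates. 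The difference is that the noise covariance seen by the projection lives in $\mV^f_{\rvn}$, and only its overlap with $\mV^f_X$ costs anything.

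\textbf{Splitting the signal space.} Decompose $\mV^f_X = \mathcal{U}\oplus(\mV^f_X\cap\mV^f_{\rvn})$, where $\mathcal{U}$ is a complement that can be taken orthogonal to $\mV^f_{\rvn}$ after a suitable linear reparametrisation absorbed into $f$. The intersection has dimension $\rvr^*$. Now take the kept coordinates to span $\mathcal{U}$ plus a $d'$-dimensional subspace of the intersection. Then $\mH\rvf_{\rvn}$ has support only in that $d'$-dimensional piece. We need two facts here. First, $\rvf_{\rvx}\in\mV^f_X$ almost surely, since the image of the second moment contains the support. Second, $h(\rvx)$ and $h(\rvy)$ differ only through $\mH\rvf_{\rvn}$, because $\mH f(\rvy) = \mH\rvf_{\rvx} + \mH\rvf_{\rvn}$.

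\textbf{Bounding the two errors.} Apply the linear-Gaussian computation of Theorem~\ref{thm:balance_ni_op}(ii) with $q(\rvz|\rvh)$ evaluated at $\rvh = h(\cdot)$. This requires controlling $\|h(\rvx) - h(\rvy)\|$ by $\|\mH\rvf_{\rvn}\|$ through the Lipschitz constant of $f^{-1}$, which I will assume is bounded on the relevant region and absorb into $\lesssim$. The inference invariance is then $\lesssim\sqrt{d'}$, so the constraint holds whenever $d'\lesssim\varepsilon^2$. For the reconstruction error, $\rvx - h(\rvx) = f^{-1}(\rvf_{\rvx}) - f^{-1}(\mH\rvf_{\rvx})$. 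This is again bounded by a Lipschitz constant times $\|(\mI-\mH)\rvf_{\rvx}\|$, and that vector lies in the unkept $(\rvr^*-d')$-dimensional part of the intersection. Hence the reconstruction error is $\lesssim \rvr^* - d' \asymp \rvr^*-\varepsilon^2$, which gives (ii). For (i), when $\rvr^*=0$ take $d'=0$. Then $\mH\rvf_{\rvx}=\rvf_{\rvx}$ almost surely, so $h(\rvx)=\rvx$ exactly. Also $\mH\rvf_{\rvn}=0$, so $h(\rvx)=h(\rvy)$ and the constraint holds with value $0$. This gives $\rve^*=0$.

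\textbf{Main obstacle.} The delicate step is the orthogonal decomposition. $\mV^f_X$ and $\mV^f_{\rvn}$ need not meet orthogonally, so $\mathcal{U}$ might leak into $\mV^f_{\rvn}$. I would first use a linear map (composed into $f$, which keeps $f$ a diffeomorphism) that sends a basis adapted to $\mathcal{U}$, the intersection, and a complement of $\mV^f_{\rvn}$ to standard coordinates. This makes $\mathcal{U}\perp\mV^f_{\rvn}$ in the new coordinates. The Lipschitz and attainment issues of $f$ are handled only up to constants, consistent with the $\lesssim$ in the statement.
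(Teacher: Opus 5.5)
Your proposal follows the paper's proof. Like the paper, you fix a minimizer $f^*$ of~\eqref{eq:optim_separate_noise}, take $h=(f^*)^{-1}\circ\mH\circ f^*$ with the block coordinate projection $\mH=\mathrm{diag}(\mI_{r_1+d},\mathbf{0})$, reuse the $\sqrt{d}$ inference-invariance bound from Theorem~\ref{thm:balance_ni_op}(ii) to allow $d\asymp\varepsilon^2$, and read off a reconstruction error of order $\rvr^*-d$, with $d=0$ giving part (i). Your extra step of absorbing an invertible linear change of basis into $f$ so that $\mathcal{U}\perp\mV^f_{\rvn}$ is an improvement: the paper's ``without loss of generality'' coordinate choice and its identity $\Tr(\mathrm{proj}_{\mV_X^*}\mSigma_{\rvn}^*)=0$ in part (i) tacitly assume $\mV_X^*\perp\mV_{\rvn}^*$, which a trivial intersection alone does not give (your explicit Lipschitz assumption on $f^{-1}$ is likewise only implicit in the paper's $\lesssim$).
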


\begin{proof}[Proof of Theorem~\ref{thm:balance_nonlinear_lr2flow}]
\textbf{(i)} Consider the case where $\rvr^* = 0$. Let $f^*\in\mathrm{Diff}(\sR^{K})$ be the optimal solution to~\eqref{eq:optim_separate_noise}, and denote the transformed signal and noise components as $\rvf_{\rvx} = f^*(\rvx)$ and $\rvf_{\rvn} = f^*(\rvy) - f^*(\rvx)$, respectively. Let $\mSigma_{X}^* = \E_{q_{X}(\rvx)}[\rvf_{\rvx} \rvf_{\rvx}^{\top}]$, $\mSigma_{\rvn}^* = \E_{q_{X,Y}(\rvx, \rvy)}[\rvf_{\rvn} \rvf_{\rvn}^{\top}]$, and define the image spaces $\mV_{X}^* = \mathrm{Im}(\mSigma_{X}^*)$ and $\mV_{\rvn}^* = \mathrm{Im}(\mSigma_{\rvn}^*)$. By assumption, $\mV^{*}_{X} \cap \mV^*_{\rvn} = \{\mathbf{0}\}$. Defining $h = (f^{*})^{-1} \circ \mathrm{proj}_{\mV^*_{X}} \circ f^*$, where $\mathrm{proj}_{\mV^*_{X}}$ denotes the orthogonal projector onto $\mV^*_{X}$, we bound the squared inference invariance error
\begin{equation*}
\begin{aligned}
\E_{q_{X,Y}(\rvx,\rvy)}
\|q(\rvz | \rvh_{\rvx}) - q(\rvz | \rvh_{\rvy})\|_{1}^{2} 
\lesssim & \E_{q_{X,Y}(\rvx,\rvy)}\|\rvh_{\rvx} - \rvh_{\rvy}\|_{2}^2 \\
\lesssim & \E_{q_{X,Y}(\rvx,\rvy)}
\|\mathrm{proj}_{\mV^*_{X}} (f^*(\rvx) - f^*(\rvy))\|_{2}^2 \\
= & \Tr(\mathrm{proj}_{\mV^*_{X}} \, \mSigma_{\rvn}^*) = 0.
\end{aligned}
\end{equation*}
Simultaneously, the information preservation error satisfies
\begin{equation*}
\begin{aligned}
\E_{q_{X}(\rvx)}\|\rvx - \rvh_{\rvx}\|_{2}^2 
\lesssim & \E_{q_{X}(\rvx)}\|(\mI - \mathrm{proj}_{\mV^*_{X}}) \, f^*(\rvx)\|_{2}^2 \\
= & \Tr((\mI - \mathrm{proj}_{\mV^*_{X}}) \, \mSigma_{X}^*) = 0.
\end{aligned}
\end{equation*}
This demonstrates $\rve^*_{\gH_{\text{MSR}}} = 0$.

\noindent \textbf{(ii)} For the case $\rvr^* > 0$, we first consider the simplified scenario where $f$ in~\eqref{eq:optim_separate_noise} is the identity map and $h$ is a linear projection $h(\rvx) = \mH \rvx$, with $\mH\in\gS^{K}_{+}$ and $\mH^2=\mH$. Let $\sR^{K}$ be decomposed as $\sR^{K} = \bigoplus_{i=1}^{3}\mV_{i}$, such that $\mathrm{Im}(\mSigma_{X}) = \mV_{1} \oplus \mV_{2}$ and $\mathrm{Im}(\mSigma_{\rvn}) = \mV_{2} \oplus \mV_{3}$. Let $r_{i} = \mathrm{dim}(\mV_{i})$. Without loss of generality, we assume $\mV_{1} = \mathrm{span}(\{\rve_i\}_{i=1}^{r_1})$, $\mV_{2} = \mathrm{span}(\{\rve_i\}_{i=r_1+1}^{r_1+r_2})$, and $\mV_{3} = \mathrm{span}(\{\rve_i\}_{i=r_1+r_2+1}^{K})$, where $\rve_{i}$ is the $i$-th standard basis vector. We construct the block diagonal matrix $\mH$ as
\begin{equation}
\label{eq:block_H}
\mH = \mathrm{diag} \left(\mI_{r_1+d},  \mathbf{0}_{r_2+r_3 - d}\right).
\end{equation}
As established in~\eqref{eq:inference_invariance_bound_linear_lr2flow}, a sufficient condition to satisfy~\eqref{eq:lud-vae_inference_invariance}
is $d \lesssim \varepsilon^2$. With the structure of $\mH$ defined in~\eqref{eq:block_H}, the information preservation error becomes
\begin{equation}
\label{eq:information_preservation_bound_linear_lr2flow}
\E_{q_{X}(\rvx)}\|\rvx - \rvh_{\rvx}\|_2^2 = \Tr(\mSigma_{X}(\mI - \mH)) \asymp r_2 - d \asymp r_2 - \varepsilon^2.
\end{equation}
For the general case, let
$f^*\in\mathrm{Diff}(\sR^{K})$ be the optimal solution
to~\eqref{eq:optim_separate_noise}.
We derive a result analogous to~\eqref{eq:information_preservation_bound_linear_lr2flow} by analyzing the covariance matrices in the transformed space:
$\mSigma^{*}_{X} = \E_{q_{X}(\rvx)}[\rvf_{\rvx} \rvf_{\rvx}^{\top}]$
and
$\mSigma_{\rvn}^{*} = \E_{q_{X,Y}(\rvx, \rvy)}[\rvf_{\rvn} \rvf_{\rvn}^{\top}]$.
In this setting, we define
$h = (f^*)^{-1} \circ \mH \circ f^*$, where $\mH$ follows the
structure in~\eqref{eq:block_H}. Consequently, the rank term $r_2$
in~\eqref{eq:information_preservation_bound_linear_lr2flow}
corresponds directly to the intrinsic intersection rank $\rvr^*$
defined in~\eqref{eq:optim_separate_noise}, yielding the final bound
$\rve_{\text{MSR}}^* \lesssim \rvr^* - \varepsilon^2$.
\end{proof}

Theorem~\ref{thm:balance_nonlinear_lr2flow} characterizes the information preservation error via the intrinsic separability of the signal and noise components. Specifically, it demonstrates that perfect inference invariance $q(\rvz | \rvh_{\rvx}) = q(\rvz | \rvh_{\rvy})$ is achievable with zero information loss, provided the underlying signal and noise manifolds are perfectly separable.


\section{Experiments}
This section comprehensively evaluates the practical performance of LUD-MSR, encompassing real-world noise modeling and downstream denoising applications, with a specific focus on the challenging cryo-EM image denoising task. The highest quantitative scores are highlighted in \textbf{bold}.

\subsection{Results for the Generated Noisy Images}
\label{subsec:noise_synthesis}

In this part, we evaluate the fidelity of images synthesized via the clean-to-noisy generation pipeline of LUD-MSR, i.e., $p_{\text{X$\to$Y}}$ defined in~\eqref{eq:lud-msr_joint}.

\noindent\textbf{Experimental settings.}
LUD-MSR is trained on the SIDD-Medium dataset~\cite{abdelhamed2018sidd}, which comprises 320 clean-noisy image pairs captured across 5 camera sensors and 15 ISO levels. We generate training data by randomly cropping 300 patches of size $256\times 256$ per image, yielding 96,000 paired samples. To simulate the unpaired setting, we partition these pairs into two disjoint subsets of 48,000 samples each, using clean images from the first subset and noisy images from the second. For semi-supervised experiments, we sample random subsets of 0.01\% (10 pairs), 0.1\% (96 pairs), and 1\% (960 pairs) of the total data to serve as the paired domain.

\noindent\textbf{Training details.} 
LUD-MSR trains in two stages, as outlined in Algorithm~\ref{alg:sir-nm_train} and Algorithm~\ref{alg:sir-nm_semi_train}. 
In Stage I, we train the MSR mapping for 20k, 50k, 65k, and 200k iterations for the unpaired, 10-pair, 96-pair, and 960-pair settings, respectively. We optimize using AdamW~\cite{loshchilov2017decoupled} with an initial learning rate of $2\times 10^{-4}$, halved every 50k iterations. We use a batch size of 8 and $128\times 128$ random crops with flips and rotations. The Gaussian noise level $\sigma$ in~\eqref{eq:lr2flow_objective_unpaired} is sampled uniformly from $[0, \tfrac{40}{255}]$.
In Stage II, we optimize the probabilistic graphical model for 500k iterations using Adam~\cite{kingma2014adam}. The initial learning rate of $10^{-4}$ is halved at 100k, 175k, 250k, 325k, 400k, and 450k iterations. We use a batch size of 8, $64\times 64$ random crops with augmentations, and empirically set the mixture weight to $\alpha=0.5$ in~\eqref{eq:inference_models_unpaired} and~\eqref{eq:inference_model_paired}.

\noindent\textbf{Compared methods.}
Given that LUD-MSR is capable of synthesizing realistic noisy images in both unpaired and semi-supervised scenarios, we compare LUD-MSR against (i) unpaired noise modeling methods~\cite{jang2021c2n, wolf2021deflow, zheng2022learn} and (ii) semi-supervised approaches~\cite{zheng2024senm}. We also include supervised noise models~\cite{yue2020dual, kousha2022modeling, fu2023srgb, zheng2024senm, kim2024srgb} as a reference. Synthesized noise quality is assessed via Average KL Divergence (AKLD)~\cite{yue2020dual} and Fr\'echet Inception Distance (FID)~\cite{heusel2017gans}.

\begin{table}[htbp]
\centering
\caption{Quantitative Evaluation of Noise Generation Performance on Real-World Noise Datasets}
\label{tab:quant_synthetic_noise}
{
\setlength{\tabcolsep}{3pt}
\footnotesize
{%
\begin{tabular}{@{} l c c c c c @{}}
\toprule
\multirow{2}{*}{Model} & \multirow{2}{*}{\# Paired Data} & \multicolumn{2}{c}{SIDD validation} & \multicolumn{2}{c}{SIDD+} \\
\cmidrule(lr){3-4} \cmidrule(l){5-6} 
& & AKLD & FID & AKLD & FID \\
\midrule
C2N~\cite{jang2021c2n}        & \multirow{4}{*}{0}           & 0.2802          & 34.04          & 0.8252          & 38.97          \\
DeFlow~\cite{wolf2021deflow}  &                              & --              & 39.45          & --              & --             \\
LUD-VAE~\cite{zheng2022learn} &                              & 0.2177          & 35.72          & 0.2497          & 69.67          \\
LUD-MSR                       &                              & \textbf{0.1111} & \textbf{18.05} & \textbf{0.1459} & \textbf{38.24} \\
\midrule
SeNM-VAE~\cite{zheng2024senm} & \multirow{2}{*}{0.01\% (10)} & 0.1282          & 17.25          & 0.2113          & 45.23          \\
LUD-MSR                       &                              & \textbf{0.1099} & \textbf{16.19} & \textbf{0.1471} & \textbf{35.76} \\
\midrule
SeNM-VAE~\cite{zheng2024senm} & \multirow{2}{*}{0.1\% (96)}  & 0.1162          & 16.76          & 0.1649          & 37.21          \\
LUD-MSR                       &                              & \textbf{0.1097} & \textbf{15.09} & \textbf{0.1505} & \textbf{34.96} \\
\midrule
SeNM-VAE~\cite{zheng2024senm} & \multirow{2}{*}{1\% (960)}   & 0.1166          & 15.10          & 0.1717          & 40.88          \\
LUD-MSR                       &                              & \textbf{0.1083} & \textbf{14.35} & \textbf{0.1507} & \textbf{34.39} \\
\midrule
DANet~\cite{yue2020dual}      & \multirow{6}{*}{100\%}       & 0.2124          & 26.25          & 0.4153          & 72.95          \\
Flow-sRGB~\cite{kousha2022modeling} &                        & 0.4716          & 28.60          & 0.5855          & 45.14          \\
NeCA-S~\cite{fu2023srgb}      &                              & 1.0470          & 36.43          & 1.7921          & 56.36          \\
NeCA-W~\cite{fu2023srgb}      &                              & 0.1436          & 19.96          & 0.2408          & 29.27          \\
NAFlow~\cite{kim2024srgb}     &                              & 0.1306          & 18.31          & 0.2911          & 39.21          \\
SeNM-VAE~\cite{zheng2024senm} &                              & 0.1180          & 13.79          & 0.1626          & 37.78          \\
\bottomrule
\end{tabular}
}
}
\end{table}

\noindent\textbf{Results.}
Table~\ref{tab:quant_synthetic_noise} summarizes the quantitative noise synthesis results on the SIDD validation set and the SIDD+ dataset~\cite{abdelhamed2020ntire}. LUD-MSR significantly outperforms state-of-the-art unpaired and semi-supervised baselines, achieving performance that is highly competitive with fully supervised models. This demonstrates the superiority of our probabilistic graphical model design and the inductive bias introduced by the MSR mapping~\eqref{eq:msr_mapping}. A qualitative comparison in Figure~\ref{fig:quant_noise_gen} confirms that LUD-MSR generates markedly more realistic noise patterns than the unpaired baselines, where C2N produces over-smoothed textures, and LUD-VAE introduces noticeable artifacts from excessive information loss. Furthermore, LUD-MSR exhibits robust generalization on the SIDD+ dataset, effectively capturing noise conditioned on scenes and lighting conditions unseen in the SIDD-Medium training set.

\begin{figure*}
\centering
\includegraphics[width=\linewidth]{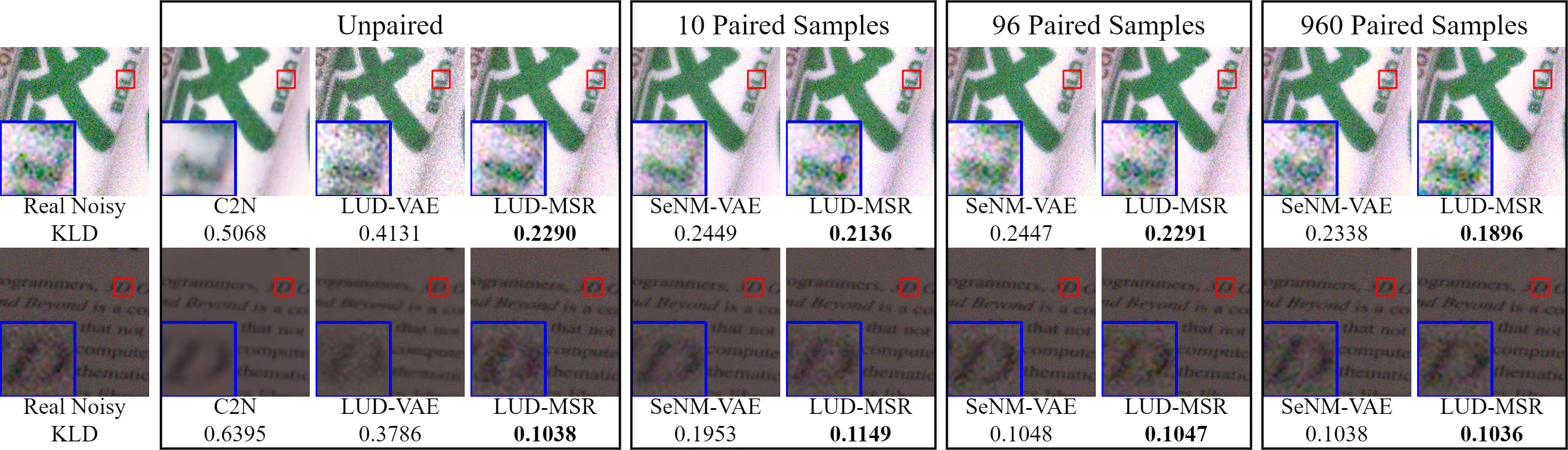}
\caption{Visual comparison of noisy images generated by various noise modeling methods on the SIDD validation set. KLD ($\downarrow$) is computed following~\cite{yue2020dual} with $L=1$.}
\label{fig:quant_noise_gen}
\end{figure*}

\subsection{Results for Downstream task: Real-World Image Denoising}
\label{subsec:downstream_denoising}
In this part, We evaluate the downstream performance of LUD-MSR on real-world denoising benchmarks.

\noindent\textbf{Experimental settings.}
Using trained LUD-MSR, we generate a pseudo-paired training dataset by sampling clean images from the SIDD-Medium set and synthesizing their noisy counterparts via the clean-to-noisy generation pipeline.
We train a DnCNN~\cite{zhang2017beyond} denoiser on this synthetic dataset and evaluate it across real-world benchmarks, including SIDD validation set, SIDD benchmark, DND benchmark~\cite{plotz2017dnd}, and the SIDD+, PolyU~\cite{xu2018polyu}, CC~\cite{nam2016cc} datasets. Denoising quality is assessed via Peak Signal-to-Noise Ratio (PSNR) and Structural Similarity Index (SSIM).

\noindent\textbf{Training details.} 
Following the protocols in~\cite{zheng2022learn,zheng2024senm}, we train DnCNN for 300k iterations using Adam. The initial learning rate of $10^{-4}$ is halved every 100k iterations. We use a batch size of 64 and $40 \times 40$ random crops with flips and rotations. We validate denoising performance on the SIDD validation set every 5k iterations, selecting the checkpoint with the highest PSNR for final evaluation.

\noindent\textbf{Compared methods.} 
We compare LUD-MSR against representative noise modeling methods~\cite{jang2021c2n,zheng2022learn,zheng2024senm,yue2020dual,kousha2022modeling,fu2023srgb,kim2024srgb}. We also include a DnCNN model trained on real paired data as a reference.

\noindent\textbf{Results.} 
Table~\ref{tab:downstream_denoising_quantitative} details the quantitative denoising performance, where LUD-MSR outperforms competing methods by achieving the highest PSNR and SSIM scores across most benchmarks. Qualitative comparisons in Figure~\ref{fig:denoising_quant} confirm that LUD-MSR restores cleaner images with sharper details. We attribute these gains to our joint distribution modeling framework, which captures diverse, realistic noise characteristics to effectively bridge the domain gap between synthetic training data and real-world distributions.

\begin{figure*}
\centering
\includegraphics[width=\linewidth]{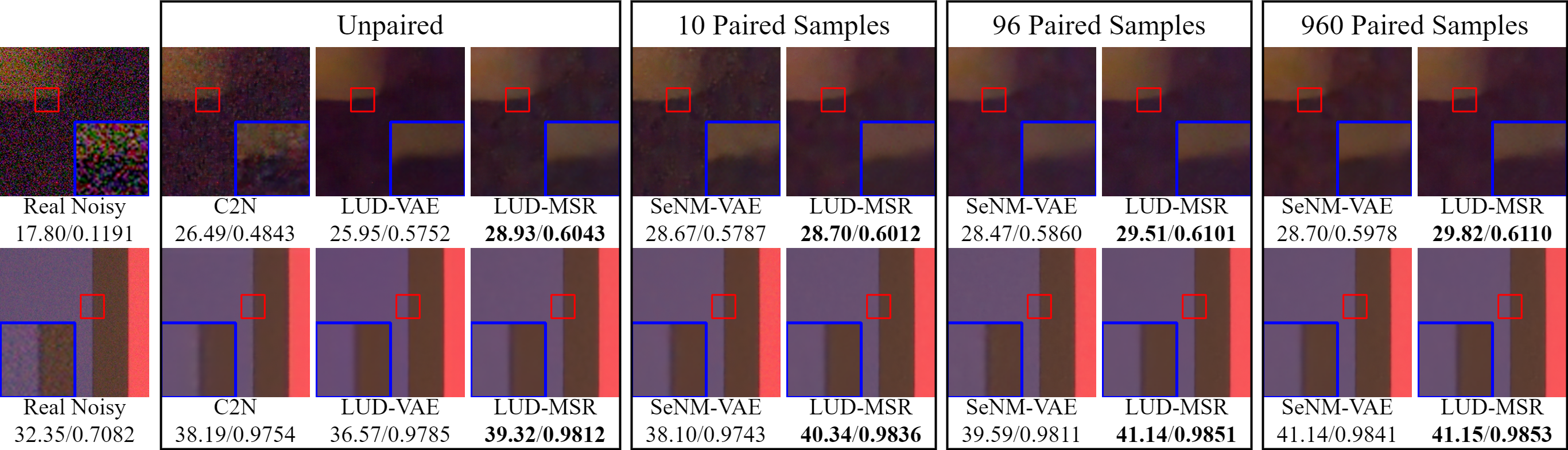}
\caption{Visual comparison of denoising results for DnCNN models trained on synthetic data from various noise modeling methods. PSNR/SSIM are reported.}
\label{fig:denoising_quant}
\end{figure*}

\begin{table*}[t]
\centering
\caption{Quantitative Denoising Performance of DnCNN Models Trained on Synthetic Data from Various Noise Modeling Methods}
\label{tab:downstream_denoising_quantitative}
{
\setlength{\tabcolsep}{3pt}
\footnotesize
\resizebox{\textwidth}{!}{%
\begin{tabular}{@{} l c *{12}{c} @{}}
\toprule
\multirow{2}{*}{Methods} & \multirow{2}{*}{\# Paired Data} & \multicolumn{2}{c}{SIDD validation} & \multicolumn{2}{c}{SIDD benchmark} & \multicolumn{2}{c}{DND} & \multicolumn{2}{c}{SIDD+} & \multicolumn{2}{c}{PolyU} & \multicolumn{2}{c@{}}{CC} \\
\cmidrule(lr){3-4} \cmidrule(lr){5-6} \cmidrule(lr){7-8} \cmidrule(lr){9-10} \cmidrule(lr){11-12} \cmidrule(l){13-14}
& & PSNR & SSIM & PSNR & SSIM & PSNR & SSIM & PSNR & SSIM & PSNR & SSIM & PSNR & SSIM \\
\midrule
C2N~\cite{jang2021c2n} & \multirow{4}{*}{0} & 34.11 & 0.818 & 35.34 & 0.806 & 36.08 & 0.903 & 34.27 & 0.852 & 37.07 & 0.955 & 35.24 & \textbf{0.938} \\
DeFlow~\cite{wolf2021deflow} & & 33.82 & -- & -- & -- & 36.71 & 0.923 & -- & -- & -- & -- & -- & -- \\
LUD-VAE~\cite{zheng2022learn} & & 34.91 & 0.892 & 35.49 & 0.883 & 37.60 & 0.933 & 34.52 & 0.891 & 33.68 & 0.942 & 31.25 & 0.920 \\
LUD-MSR & & \textbf{36.81} & \textbf{0.899} & \textbf{37.20} & \textbf{0.890} & \textbf{38.54} & \textbf{0.942} & \textbf{34.91} & \textbf{0.892} & \textbf{37.44} & \textbf{0.956} & \textbf{36.01} & 0.936 \\
\midrule
SeNM-VAE~\cite{zheng2024senm} & \multirow{2}{*}{0.01\% (10)} & 36.73 & 0.891 & 37.12 & 0.881 & 37.94 & 0.936 & 35.42 & 0.898 & 37.16 & 0.954 & 35.37 & 0.926 \\
LUD-MSR & & \textbf{37.14} & \textbf{0.902} & \textbf{37.52} & \textbf{0.894} & \textbf{38.31} & \textbf{0.942} & \textbf{35.96} & \textbf{0.904} & \textbf{37.47} & \textbf{0.957} & \textbf{36.36} & \textbf{0.950} \\
\midrule
SeNM-VAE~\cite{zheng2024senm} & \multirow{2}{*}{0.1\% (96)} & 36.92 & 0.895 & 37.32 & 0.886 & 38.21 & 0.942 & 35.09 & 0.860 & 35.60 & 0.938 & 34.32 & 0.906 \\
LUD-MSR & & \textbf{37.42} & \textbf{0.901} & \textbf{37.72} & \textbf{0.892} & \textbf{38.78} & \textbf{0.947} & \textbf{35.79} & \textbf{0.901} & \textbf{37.50} & \textbf{0.954} & \textbf{35.75} & \textbf{0.928} \\
\midrule
SeNM-VAE~\cite{zheng2024senm} & \multirow{2}{*}{1\% (960)} & 37.28 & 0.902 & 37.58 & 0.895 & 38.44 & 0.943 & \textbf{35.46} & 0.889 & 37.34 & 0.953 & 35.40 & 0.918 \\
LUD-MSR & & \textbf{37.68} & \textbf{0.903} & \textbf{37.95} & \textbf{0.895} & \textbf{38.87} & \textbf{0.947} & 35.44 & \textbf{0.891} & \textbf{37.39} & \textbf{0.953} & \textbf{35.63} & \textbf{0.924} \\
\midrule
DANet~\cite{yue2020dual} & \multirow{5}{*}{100\%} & 36.25 & 0.885 & 36.63 & 0.875 & 38.21 & 0.943 & 34.82 & 0.882 & 37.17 & 0.958 & 33.86 & 0.928 \\
Flow-sRGB~\cite{kousha2022modeling} & & 33.24 & 0.868 & 34.41 & 0.857 & 36.09 & 0.895 & 34.12 & 0.872 & 36.12 & 0.953 & 33.46 & 0.920 \\
NeCA-W~\cite{fu2023srgb} & & 37.04 & 0.900 & 37.35 & 0.892 & 38.70 & 0.946 & 35.65 & 0.901 & 37.25 & 0.958 & 36.23 & 0.949 \\
NAFlow~\cite{kim2024srgb} & & 37.23 & 0.898 & 37.61 & 0.891 & 38.53 & 0.943 & 36.63 & 0.924 & 36.59 & 0.951 & 34.94 & 0.937 \\
SeNM-VAE~\cite{zheng2024senm} & & 38.28 & 0.907 & 38.55 & 0.900 & 39.09 & 0.950 & 35.67 & 0.898 & 36.46 & 0.931 & 34.21 & 0.881 \\
\midrule
Real Noise & 100\% & 38.40 & 0.909 & 38.65 & 0.901 & 39.04 & 0.949 & 35.94 & 0.903 & 36.61 & 0.930 & 34.50 & 0.889 \\
\bottomrule
\end{tabular}
}
}
\end{table*}

\subsection{Application in Cryo-EM Denoising}
\label{subsec:experiments_cryoEM}
We apply LUD-MSR to the challenging task of cryo-EM image denoising. Specifically, we employ LUD-MSR to model the complex, signal-dependent noise distributions inherent to real-world cryo-EM micrographs. We then train a denoising network on the generated pseudo-paired dataset to recover high-fidelity biological structures from low-SNR observations.

\noindent\textbf{Experimental settings.}
We conduct experiments on three real-world cryo-EM datasets: EMPIAR-10025~\cite{campbell20152}, EMPIAR-10028~\cite{10.7554/eLife.03080}, and EMPIAR-10077~\cite{Fischer2016ThePT}. To construct the unpaired training data, we select reference proteins from the protein data bank (PDB) that are structurally homologous to the target macromolecules. Leveraging sequence similarity, these homologues typically share low-frequency structural features with the target proteins. Specifically, we utilize structures 6HVR (for 10025), 7ZJW (for 10028), and 8UTJ (for 10077) as references. The atomic coordinates of these homologous proteins are first converted into 3D density maps at the target resolution. Subsequently, we project these density maps from random orientations to generate noise-free simulated projections. To faithfully mimic the physical imaging process, we apply contrast transfer function (CTF) modulation matched to the intensity profile of the real data to these projections, thereby establishing the pseudo-clean domain. We assess downstream performance by analyzing the quality of the denoised 2D particle class averages. For quantitative comparison against other cryo-EM denoising methods, we employ the estimated Signal-to-Noise Ratio (SNR)~\cite{li2022noise} as our primary metric.

\noindent\textbf{Training details.}
Our pipeline trains the LUD-MSR and a subsequent denoising network using identical configurations across all datasets. For LUD-MSR, we optimize the MSR mapping using AdamW for 300k iterations with a batch size of 4. The initial learning rate of $2\times 10^{-4}$ is halved at 50k, 100k, 150k, and 200k iterations. The training objective follows~\eqref{eq:lr2flow_objective_unpaired}, where the Gaussian noise level $\sigma$ is uniformly sampled from $[0, 0.99]$. The probabilistic graphical model is optimized via Adam for 300k iterations with a batch size of 2, starting with a $10^{-4}$ learning rate halved at 75k, 150k, 200k, and 250k iterations. For downstream denoising, we train the DRUNet~\cite{zhang2021plug} architecture for 300k iterations using Adam. We use a batch size of 8 and an initial learning rate of $10^{-4}$, halved every 50k iterations.




\noindent\textbf{Compared methods.} 
We compare LUD-MSR against prominent model-based approaches, including the low-pass filter, the fast Gaussian binomial filter~\cite{haddad1991class}, and BM3D~\cite{dabov2007image}, alongside the state-of-the-art learning-based method Topaz~\cite{bepler2020topaz}.

\begin{table}[htbp]
\centering
\caption{Quantitative Comparison of Cryo-EM Image Denoising Methods Based on Estimated SNR~\cite{li2022noise}.}
\label{tab:quant_cryoEM}
{
\setlength{\tabcolsep}{4pt}
\footnotesize
{%
\begin{tabular}{@{} l *{5}{c} @{}}
\toprule
Dataset & Low-Pass & Gaussian~\cite{haddad1991class} & BM3D~\cite{dabov2007image} & Topaz~\cite{bepler2020topaz} & LUD-MSR \\
\midrule
10025 & 1.94 & -0.68 & 11.69 & -4.73 & \textbf{14.54} \\
10028 & 5.62 & -0.09 & 4.89 & -0.75 & \textbf{13.62} \\
10077 & 0.36 & -0.92 & 4.41 & -0.75 & \textbf{13.08} \\
\bottomrule
\end{tabular}
}
}
\end{table}

\noindent\textbf{Results.} 
Table~\ref{tab:quant_cryoEM} presents the estimated SNR results for the evaluated cryo-EM denoising approaches. LUD-MSR significantly surpasses existing baselines. Visual comparisons in Figure~\ref{fig:cryo_denoise} confirm that while competing methods struggle to distinguish signal from background noise under extremely low-SNR conditions, LUD-MSR successfully restores particles with high-fidelity structural details and fine textures. This substantial improvement underscores LUD-MSR's capacity to capture complex real-world noise characteristics, demonstrating its potential for challenging scientific applications.

\begin{figure*}
\centering
\includegraphics[width=\linewidth]{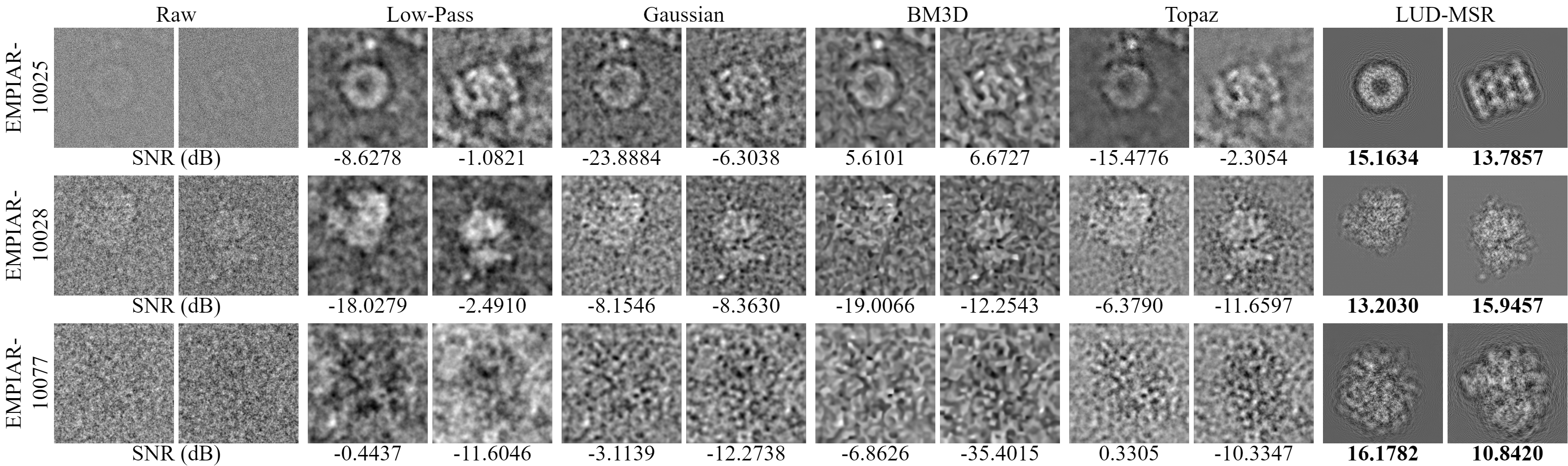}
\caption{Visual comparison of denoising results across three real-world cryo-EM datasets: EMPIAR-10025, EMPIAR-10028, and EMPIAR-10077.}
\label{fig:cryo_denoise}
\end{figure*}

\subsection{Ablation Study}
Our default MSR mapping~\eqref{eq:msr_mapping} employs $T=3$ scale levels and a linear B-spline wavelet tight frame for the wavelet module $\mW$. In this part, we ablate these two configurations and report the resulting PSNR and SSIM metrics on the SIDD validation set.

\noindent\textbf{Scale levels $T$.}
While keeping the wavelet module $\mW$ fixed as a linear B-spline wavelet tight frame, we evaluate the impact of $T$ on noise synthesis fidelity and downstream denoising performance. As shown in Table~\ref{tab:ablate_scale_level}, $T=3$ yields the best overall performance, indicating an optimal trade-off between domain consistency and information preservation.

\begin{table}
\centering
\caption{Ablation Study on the Scale Levels $T$ within the MSR Design.}
\label{tab:ablate_scale_level}
{
\footnotesize
\setlength{\tabcolsep}{3pt}
{%
\begin{tabular}{@{}ccccccc@{}}
\toprule
\multirow{2}{*}{$T$} & \multicolumn{2}{c}{MSR Performance} & \multicolumn{2}{c}{Noise Gen.} & \multicolumn{2}{c}{Denoising} \\
\cmidrule(lr){2-3} \cmidrule(lr){4-5} \cmidrule(lr){6-7}
& Rec. PSNR & Align. PSNR & FID & AKLD & PSNR & SSIM \\
\midrule
1 & \textbf{45.45} & 36.27 & 24.99 & 0.1276 & 36.24 & 0.8918 \\
2 & 41.77 & 38.02 & 22.94 & \textbf{0.1083} & 36.67 & \textbf{0.8993} \\
3 & 39.41 & 40.22 & \textbf{18.05} & 0.1111 & \textbf{36.81} & 0.8991 \\
4 & 37.71 & \textbf{41.27} & 22.04 & 0.1213 & 36.67 & 0.8949 \\
\bottomrule
\end{tabular}
}
}
\end{table}

\noindent\textbf{Wavelet module $\mW$.}
We compare the use of a (linear B-spline) wavelet tight frame for $\mW$ against the Haar wavelet. Furthermore, interpreting our MSR mapping~\eqref{eq:msr_mapping} as a learnable resizing operator, we evaluate non-learnable interpolation baselines formulated as $h = \gI_{s}\circ \gI_{1/s}$, where $\gI_{\tau}$ denotes Bicubic or Bilinear resizing with a scale factor of $\tau$. To ensure a fair comparison of feature granularity, we set $T=3$ for the learnable mappings and $s=8$ for the non-learnable baselines. Table~\ref{tab:ablate_msr_implementation} demonstrates that the wavelet tight frame achieves the best noise generation quality and downstream denoising performance. This result confirms the advantages of our data-adaptive design strategy, while the performance gain of the wavelet tight frame over the Haar wavelet highlights the benefits of its representation flexibility.

\begin{table}[!t]
\centering
\caption{Ablation Study on the wavelet module. \enquote{TF} Refers to \enquote{Wavelet Tight Frame}.}
\label{tab:ablate_msr_implementation}
{
\footnotesize
\setlength{\tabcolsep}{3pt}
{%
\begin{tabular}{@{}lcccccc@{}}
\toprule
\multirow{2}{*}{\makecell[l]{MSR \\ Impl.}} & \multicolumn{2}{c}{MSR Performance} & \multicolumn{2}{c}{Noise Gen.} & \multicolumn{2}{c}{Denoising} \\
\cmidrule(lr){2-3} \cmidrule(lr){4-5} \cmidrule(lr){6-7}
& Rec. PSNR & Align. PSNR & FID & AKLD & PSNR & SSIM \\
\midrule
Bilinear & 36.90 & 36.79 & 33.55 & 0.1465 & 35.22 & 0.8870 \\
Bicubic  & 38.14 & 35.07 & 32.66 & 0.1231 & 35.47 & 0.8846 \\
Haar     & \textbf{39.43} & 38.00 & 18.89 & \textbf{0.1093} & 35.97 & 0.8959 \\
TF       & 39.41 & \textbf{40.22} & \textbf{18.05} & 0.1111 & \textbf{36.81} & \textbf{0.8991} \\
\bottomrule
\end{tabular}
}
}
\end{table}

\section{Conclusion}
We propose LUD-MSR, a novel unpaired joint distribution modeling framework based on a probabilistic graphical model utilizing two auxiliary variables. For image noise modeling, we explicitly construct these variables via Multi-Scale image Representation (MSR) mapping. Theoretical analysis establishes that LUD-MSR achieves a superior trade-off between domain consistency
and information preservation compared to existing approaches, thus significantly reducing joint distribution approximation error. Through its clean-to-noisy generation pipeline, LUD-MSR synthesizes high-fidelity paired data, effectively boosting downstream denoising performance. Furthermore, LUD-MSR demonstrates exceptional versatility on the challenging task of cryo-EM image denoising.

\appendix

\section{Architecture of Probabilistic Graphical Model}
\label{appendix:architecture}
\noindent\textbf{Hierarchical structure.}
To enhance generative performance, we adopt a hierarchical VAE~\cite{child2020very,vahdat2020nvae} architecture for the probabilistic graphical model of LUD-MSR. Specifically, we represent the latent variables $\rvz$ and $\rvz_{\rvn}$ as sequences across $L=7$ layers
\begin{equation*}
\rvz = \left(\rvz^1,\rvz^2,\ldots,\rvz^L\right), \quad \rvz_{\rvn} = \left(\rvz_{\rvn}^1, \rvz_{\rvn}^2,\ldots,\rvz_{\rvn}^L\right).
\end{equation*}
Applying the chain rule of probability, we obtain the following autoregressive decompositions
\begin{equation*}
\begin{aligned}
& {\textstyle q(\rvz | \rvx) = \prod_{l=1}^{L}q(\rvz^{l} | \rvz^{>l}, \rvx) \text{, } q(\rvz | \rvh_{\rvx}) = \prod_{l=1}^{L}q(\rvz^{l} | \rvz^{>l}, \rvh_{\rvx})}, \\
& {\textstyle q(\rvz | \rvy) = \prod_{l=1}^{L}q(\rvz^{l} | \rvz^{>l}, \rvy) \text{, } q(\rvz | \rvh_{\rvy}) = \prod_{l=1}^{L}q(\rvz^{l} | \rvz^{>l}, \rvh_{\rvy})},
\end{aligned}
\end{equation*}
where $\rvz^{>l} = (\rvz^{l+1}, \ldots, \rvz^{L})$. Furthermore, we define
\begin{equation*}
\begin{aligned}
p(\rvz_{\rvn} | \rvz) = & {\textstyle \prod_{l=1}^{L} p(\rvz_{\rvn}^{l} | \rvz_{\rvn}^{>l}, \rvz^{\geq l})}, \\
q(\rvz_{\rvn} | \rvy, \rvz) = & {\textstyle \prod_{l=1}^{L} q(\rvz_{\rvn}^{l} | \rvy, \rvz_{\rvn}^{>l}, \rvz^{\geq l})},
\end{aligned}
\end{equation*}
where $\rvz^{\geq l}$ and $\rvz_{\rvn}^{>l}$ are defined analogously to $\rvz^{>l}$.
Consequently, the full KL divergence terms $\KL(q(\rvz | \rvx) || q(\rvz | \rvh_{\rvx}))$, $\KL(q(\rvz | \rvy) || q(\rvz | \rvh_{\rvy}))$, and $\KL(q(\rvz_{\rvn} | \rvy, \rvz) || p(\rvz_{\rvn} | \rvz))$ can be decomposed layer-wise as
\begin{equation*}
\begin{aligned}
& {\textstyle \KL(q(\rvz | \rvx) || q(\rvz | \rvh_{\rvx})) = \sum_{l=1}^{L}\KL(q(\rvz^{l} | \rvx, \rvz^{>l}) || q(\rvz^{l} | \rvh_{\rvx}, \rvz^{>l}))}, \\
& {\textstyle \KL(q(\rvz | \rvy) || q(\rvz | \rvh_{\rvy})) = \sum_{l=1}^{L}\KL(q(\rvz^{l} | \rvy, \rvz^{>l}) || q(\rvz^{l} | \rvh_{\rvy}, \rvz^{>l}))} ,\\
& {\textstyle \KL(q(\rvz_{\rvn} | \rvy, \rvz) || p(\rvz_{\rvn} | \rvz)) = \sum_{l=1}^{L}\KL(q(\rvz_{\rvn}^{l} | \rvy, \rvz^{\geq l}, \rvz_{\rvn}^{>l}) || p(\rvz_{\rvn}^{l} | \rvz^{\geq l}, \rvz_{\rvn}^{>l}))} .
\end{aligned}
\end{equation*}
The distributions $q(\rvz^l | \rvx, \rvz^{>l})$, $q(\rvz^l | \rvh_{\rvx}, \rvz^{>l})$, $q(\rvz^l | \rvy,\rvz^{>l})$, $q(\rvz^l | \rvh_{\rvy},\rvz^{>l})$, $p(\rvz_{\rvn}^l | \rvz^{\geq l}, \rvz_{\rvn}^{>l})$, and $q(\rvz_{\rvn}^l | \rvy, \rvz^{\geq l}, \rvz_{\rvn}^{>l})$ are modeled as parameterized Gaussian distributions, facilitating the analytical computation of the constituent KL divergences.

\noindent\textbf{Model architecture.} 
\begin{figure*}
\centering
\includegraphics[width=\linewidth]{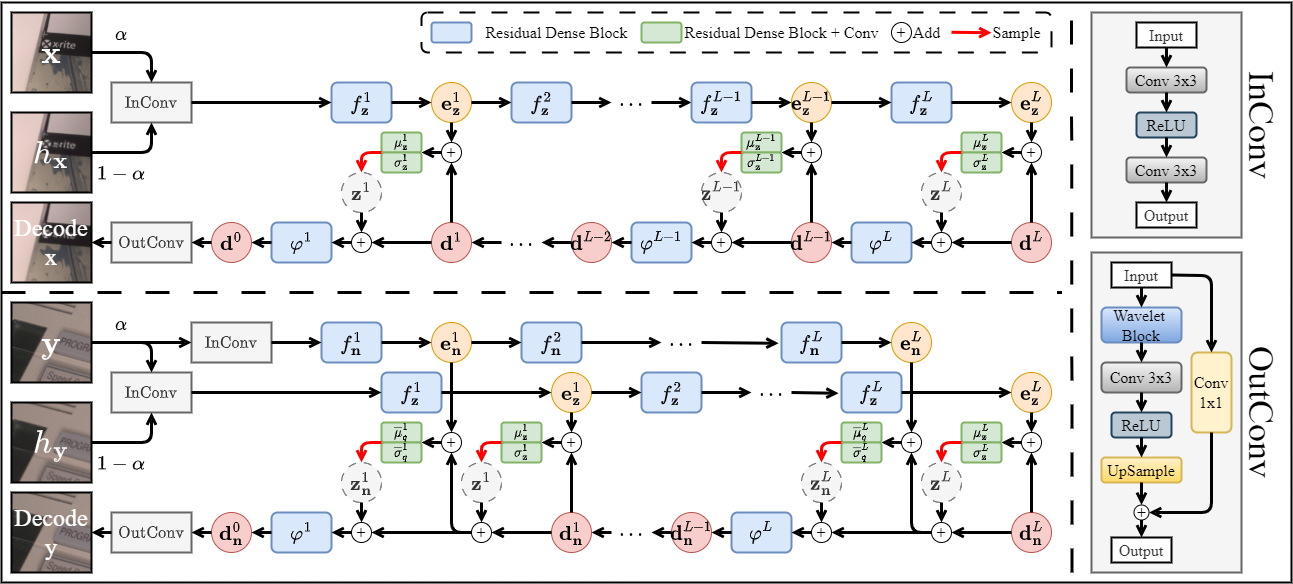}
\caption{Hierarchical architecture of the probabilistic graphical model of LUD-MSR.}
\label{fig:arch_nm}
\end{figure*}
Our hierarchical architecture is illustrated in Figure~\ref{fig:arch_nm}. For the $\rvx$-domain branch, we define
\begin{equation}
\label{eq:inference_z}
{\textstyle q(\rvz^l | \rvx, \rvz^{>l}) = \gN(\bm{\mu}_{\rvz}^{l}(\rve_{\rvz}^{l}, \rvd^{l}), \bm{\sigma}_{\rvz}^{l}(\rve_{\rvz}^{l}, \rvd^{l}))},
\end{equation}
where $\rve_{\rvz}^{l}$ and $\rvd^{l}$ denote the encoding and decoding features at the $l$-th level, respectively. The networks $\bm{\mu}_{\rvz}^{l}$ and $\bm{\sigma}_{\rvz}^{l}$ map these input features to the mean and standard deviation of the Gaussian distribution. The encoding features $\{\rve_{\rvz}^{l}\}_{l=0}^L$ are obtained recursively via
\begin{equation*}
{\textstyle \rve_{\rvz}^{0} = \gE(\rvx), \quad \rve_{\rvz}^{l} = f_{\rvz}^l(\rve_{\rvz}^{l-1}),}
\end{equation*}
where $\gE$ denotes the Input Convolution (InConv) block, and $f_{\rvz}^l$ represents the $l$-th basic encoding block. The decoding features are calculated in reverse order as
\begin{equation*}
{\textstyle \rvd^{L} = \mathbf{0}, \quad \rvd^{l-1} = \varphi^{l}(\rvz^{l} + \rvd^{l}),}
\end{equation*}
where $\varphi^{l}$ is the $l$-th basic decoding block, and $\rvz^l$ is sampled from $\gN(\bm{\mu}_{\rvz}^{l}(\rve_{\rvz}^{l}, \rvd^{l}), \bm{\sigma}_{\rvz}^{l}(\rve_{\rvz}^{l}, \rvd^{l}))$. We design $q(\rvz^l | \rvh_{\rvx},\rvz^{>l})$, $q(\rvz^l | \rvh_{\rvy},\rvz^{>l})$, and $q(\rvz^l | \rvy,\rvz^{>l})$ to share the same structural definition as $q(\rvz^l | \rvx, \rvz^{>l})$. Specifically, $q(\rvz^{l} | \rvh_{\rvx},\rvz^{>l})$ and $q(\rvz^{l} | \rvh_{\rvy},\rvz^{>l})$ share network weights with $q(\rvz^{l} | \rvx,\rvz^{>l})$.

For the $\rvy$-domain branch, we define
\begin{equation}
\label{eq:inference_zn}
{\textstyle q(\rvz_{\rvn}^{>l} | \rvy,\rvz^{\geq l}, \rvz_{\rvn}^{>l}) = \gN({\overline{\bm{\mu}}}_q^l({\rvd}_{\rvn}^{l},\rvz^{l},\rve_{\rvn}^{l}), {\overline{\bm{\sigma}}}_q^l({\rvd}_{\rvn}^{l},\rvz^{l},\rve_{\rvn}^{l}),}
\end{equation}
where $\rve_{\rvn}^{l}$ and $\rvd_{\rvn}^{l}$ denote the respective encoding and decoding features at the $l$-th level. Here, $\rvz^l$ is assumed to be sampled from $\gN( {\bm\mu}_{\rvz}^l(\rve_{\rvn}^{l}, \rvd_{\rvn}^{l}), {\bm\sigma}_{\rvz}^l(\rve_{\rvn}^{l}, \rvd_{\rvn}^{l}) )$, utilizing the parameters ${\bm\mu}_{\rvz}^l$ and ${\bm\sigma}_{\rvz}^l$ defined in~\eqref{eq:inference_z}. The networks ${\overline{\bm{\mu}}}_q^l$ and ${\overline{\bm{\sigma}}}_q^l$ parameterize the mean and variance of the Gaussian distribution. The encoding feature $\{{\rve}_{\rvn}^{l}\}_{l=0}^{L}$ is computed recursively as
\begin{equation*}
{\textstyle \rve_{\rvn}^{0} = \gE(\rvy), \quad \rve_{\rvn}^{l} = f_{\rvn}^{l}(\rve_{\rvn}^{l-1}),}
\end{equation*}
and the decoding feature is obtained via
\begin{equation}
\label{eq:decoding_noisy}
{\textstyle \rvd_{\rvn}^{L} = \mathbf{0}, \quad \rvd_{\rvn}^{l-1} = \varphi^{l}(\rvz_{\rvn}^{l} + \rvz^{l} + \rvd_{\rvn}^{l}),}
\end{equation}
where $\rvz_{\rvn}^{l}$ is sampled from~\eqref{eq:inference_zn}.

For the conditional distribution $p(\rvz_{\rvn}^l | \rvz^{\geq l}, \rvz_{\rvn}^{>l})$, we adopt a structural architecture identical to $q(\rvz_{\rvn}^{l} | \rvy,\rvz^{\geq l}, \rvz_{\rvn}^{>l})$ and define
\begin{equation*}
p(\rvz_{\rvn}^l | \rvz^{\geq l}, \rvz_{\rvn}^{>l}) = \gN( {\overline{\bm\mu}}_p^l(\rvd_{\rvn}^{l},\rvz^{l}), {\overline{\bm\sigma}}_{p}^{l}(\rvd_{\rvn}^{l},\rvz^{l})),
\end{equation*}
where the decoding feature $\rvd_{\rvn}^{l}$ is derived from~\eqref{eq:decoding_noisy}, and the networks $\overline{\bm\mu}_{p}^{l}$ and $\overline{\bm\sigma}_{p}^{l}$ parameterize the generative Gaussian distribution at the $l$-th level.

Finally, for the generative model outputs, we define
\begin{equation*}
{\textstyle p(\rvx | \rvz) = \gN(\gD(\rvd^{0}), \mI), \quad p(\rvy | \rvz, \rvz_{\rvn}) = \gN(\gD(\rvd_{\rvn}^{0}), \mI),}
\end{equation*}
where $\gD$ denotes the Output Convolution (OutConv) block. We adopt the Residual Dense Block (RDB)~\cite{zhang2018residual} as the basic internal unit for $f_{\rvz}^{l}$, $f_{\rvn}^{l}$, and $\varphi^l$.

\noindent\textbf{Controllable degradation generation.} 
When applied to noise modeling, LUD-MSR should be capable of generating images at varying degradation levels from a single clean input. We achieve this by concatenating a target degradation parameter to the inputs of the Gaussian parameter networks $\overline{\bm\mu}_{p}^{l}$, $\overline{\bm\sigma}_p^l$, $\overline{\bm\mu}_q^l$, and $\overline{\bm\sigma}_q^l$, explicitly conditioning the generation process. During training, the standard deviation of the residual $\rvy - \rvh_{\rvy}$ serves as the empirical estimate of the degradation level for $\rvy$. During inference, a user-specified degradation scalar acts as the direct conditional input to $\overline{\bm\mu}_p^l$ and $\overline{\bm\sigma}_p^l$, enabling controllable image synthesis at the desired degradation severity.

\section{Derivation of ELBOs}
\label{appendix:elbo}
\noindent \textbf{Derivation of~\eqref{eq:elbo_x} and~\eqref{eq:elbo_y}.}
According to the proposed probabilistic graphical model,
we have the following decompositions
\begin{equation*}
\begin{aligned}
& p(\rvx | \rvh_{\rvx}) = \E_{p(\rvz | \rvh_{\rvx})}p(\rvx | \rvz), \\
& p(\rvy | \rvh_{\rvy}) = \E_{p(\rvz | \rvh_{\rvy})p(\rvz_{\rvn} | \rvz)}p(\rvy | \rvz, \rvz_{\rvn}).
\end{aligned}
\end{equation*}
Applying the inference models defined in~\eqref{eq:inference_models_unpaired}, we obtain
\begingroup
\begin{equation}
\label{eq:elbo_unpaired_proof_1}
\begin{aligned}
\log p(\rvx | \rvh_{\rvx}) \geq &
\E_{q(\rvz | \rvx, \rvh_{\rvx})}
\log \frac{p(\rvz | \rvh_{\rvx}) p(\rvx | \rvz)}
{q(\rvz | \rvx, \rvh_{\rvx})} \\
= & \E_{q(\rvz | \rvx, \rvh_{\rvx})}\log p(\rvx | \rvz) - \KL(q(\rvz | \rvx, \rvh_{\rvx}) || p(\rvz | \rvh_{\rvx})), \\
\log p(\rvy | \rvh_{\rvy}) \geq &
\E_{q(\rvz | \rvy, \rvh_{\rvy})q(\rvz_{\rvn} | \rvy, \rvz)}
\log \frac{
p(\rvz | \rvh_{\rvy}) p(\rvz_{\rvn} | \rvz) p(\rvy | \rvz,\rvz_{\rvn})
}{
q(\rvz | \rvy, \rvh_{\rvy})q(\rvz_{\rvn} | \rvy, \rvz)
} \\
= & \E_{q(\rvz | \rvy, \rvh_{\rvy})q(\rvz_{\rvn} | \rvy, \rvz)} \log p(\rvy | \rvz, \rvz_{\rvn}) 
- \KL(q(\rvz | \rvy, \rvh_{\rvy}) || p(\rvz | \rvh_{\rvy})) \\
& - \E_{q(\rvz | \rvy, \rvh_{\rvy})}
\KL(q(\rvz_{\rvn} | \rvy, \rvz) || p(\rvz_{\rvn} | \rvz)).
\end{aligned}
\end{equation}
By setting $p(\rvz | \rvh_{\rvx}) = q(\rvz | \rvh_{\rvx})$ and
$p(\rvz | \rvh_{\rvy}) = q(\rvz | \rvh_{\rvy})$ via shared network
parameters,
we establish that
\begin{equation*}
\begin{aligned}
\KL(q(\rvz | \rvx, \rvh_{\rvx}) || p(\rvz | \rvh_{\rvx})) \leq & \alpha \KL(q(\rvz | \rvx) || q(\rvz | \rvh_{\rvx})), \\
\KL(q(\rvz | \rvy, \rvh_{\rvy}) || p(\rvz | \rvh_{\rvy})) \leq & \alpha \KL(q(\rvz | \rvy) || q(\rvz | \rvh_{\rvy})).
\end{aligned}
\end{equation*}
\endgroup
Substituting these inequalities into~\eqref{eq:elbo_unpaired_proof_1} yields~\eqref{eq:elbo_x} and~\eqref{eq:elbo_y}.

\noindent\textbf{Derivation of~\eqref{eq:elbo_paired}.}
Analogous to the derivation of~\eqref{eq:elbo_unpaired_proof_1}, 
\begin{equation*}
\begin{aligned}
\log p(\rvx | \rvy) \geq & \E_{q(\rvz | \rvx, \rvy)}\log p(\rvx | \rvz) - \KL(q(\rvz | \rvx, \rvy) || p(\rvz | \rvy)), \\
\log p(\rvy | \rvx) \geq & \E_{q(\rvz | \rvx, \rvy)q(\rvz_{\rvn} | \rvy, \rvz)} \log p(\rvy | \rvz, \rvz_{\rvn}) 
- \KL(q(\rvz | \rvx, \rvy) || p(\rvz | \rvx)) \\
& - \E_{q(\rvz | \rvx,\rvy)}\KL(q(\rvz_{\rvn} | \rvy, \rvz) || p(\rvz_{\rvn} | \rvz)).
\end{aligned}
\end{equation*}
Similarly, by setting $p(\rvz | \rvx) = q(\rvz | \rvx)$, $p(\rvz | \rvy) = q(\rvz | \rvy)$, and applying the inference model defined in~\eqref{eq:inference_model_paired},
\begingroup
\begin{equation*}
\begin{aligned}
\KL(q(\rvz | \rvx, \rvy) || p(\rvz | \rvy)) \leq & \alpha \KL(q(\rvz | \rvx) || q(\rvz | \rvy)), \\
\KL(q(\rvz | \rvx, \rvy) || p(\rvz | \rvx)) \leq & (1-\alpha) \KL(q(\rvz | \rvy) || q(\rvz | \rvx)).
\end{aligned}
\end{equation*}
\endgroup
Substituting these upper bounds into the preceding inequalities yields~\eqref{eq:elbo_paired}.

\noindent\textbf{Equivalence to ELBO Maximization.} 
We demonstrate that minimizing the unpaired loss $L_{\text{g}}^{\text{(up)}}$ defined in~\eqref{eq:loss_graph_unpaired} is equivalent to maximizing the ELBO of the joint log-likelihood $\log p(\rvx, \rvy)$. This equivalence holds under two assumptions: (i) $\rvh_{\rvx} = \rvh_{\rvy}$ for $(\rvx,\rvy) \sim q_{X,Y}$, and (ii) $p(\rvy | \rvx, \rvh_{\rvx}) = p(\rvy | \rvh_{\rvx})$. These assumptions are direct consequences of the domain consistency~\eqref{eq:msr_domain_consistency} and information preservation~\eqref{eq:msr_information_preservation} conditions, respectively. Denoting the push-forward distributions as $q_{X}^{h} = h_{\#}q_{X}$, $q_{Y}^{h} = h_{\#}q_{Y}$, and $q_{X,Y}^{h} = h_{\#}q_{X,Y}$, we obtain
\begin{equation*}
\begin{aligned}
p(\rvx, \rvy) = & \E_{q_{X,Y}^{h}(\rvh_{\rvx},\rvh_{\rvy})} p(\rvx,\rvy | \rvh_{\rvx}, \rvh_{\rvy}) \\
= & \E_{q_{X,Y}^{h}(\rvh_{\rvx},\rvh_{\rvy})} p(\rvx | \rvh_{\rvx}, \rvh_{\rvy}) p(\rvy | \rvx, \rvh_{\rvx}, \rvh_{\rvy}) \\
= & \E_{q_{X,Y}^{h}(\rvh_{\rvx},\rvh_{\rvy})}p(\rvx | \rvh_{\rvx})p(\rvy | \rvh_{\rvy}).
\end{aligned}
\end{equation*}
Consequently,
\begin{equation*}
\log p(\rvx,\rvy) \geq \E_{q_{X}^{h}(\rvh_{\rvx})}\log p(\rvx | \rvh_{\rvx}) + \E_{q_{Y}^{h}(\rvh_{\rvy})}\log p(\rvy | \rvh_{\rvy}). 
\end{equation*}

\section{Proof of Proposition~\ref{prop:asympotic_consistency}}
\label{sec:app_proof_of_prop_1}

\begin{proof}[Proof of Proposition~\ref{prop:asympotic_consistency}]
For $\rvx\sim q_{X}$, let $p_{\sigma}^{Y}(\cdot | \rvx)$ denote the probability density function of the sum $\rvn + \rvn_{\sigma}$, where $\rvn \sim q_{\rvn | X}(\cdot | \rvx)$ and $\rvn_{\sigma}\sim p_{\sigma}$. By assumption, there exists an $M > 0$ such that $\sup_{\rvu}|h(\rvu)| \leq M$. Consequently, we obtain
\begin{equation*}
\begin{aligned}
& \|\E_{q_{X,Y}(\rvx,\rvy) p_{\sigma}(\rvn_{\rvx}) p_{\sigma}(\rvn_{\rvy})} [h(\rvx + \rvn_{\rvx}) - h(\rvy + \rvn_{\rvy})]\|_{1} \\
\leq & \E_{q_{X}(\rvx)}{\textstyle \int |h(\rvx + \rvn)| \, |p_{\sigma}(\rvn) - p_{\sigma}^{Y}(\rvn | \rvx)|\df\rvn} \\
\leq & {\textstyle M \, \E_{q_{X}(\rvx)}\|p_{\sigma} - p_{\sigma}^{Y}(\cdot | \rvx)\|_{1}} \\
\leq & {\textstyle M \, \E_{q_{X}(\rvx)}\sqrt{2 \, \KL(p_{\sigma} || p_{\sigma}^{Y}(\cdot | \rvx))} }\\
= & {\textstyle \tfrac{M}{\sigma} \,\sqrt{\E_{q_{X,Y}(\rvx,\rvy)}\|\rvy - \rvx\|_2^2}.}
\end{aligned}
\end{equation*}
Clearly, this upper bound approaches $0$ as $\sigma\to\infty$.
\end{proof}

\section{Proof of Proposition~\ref{prop:h_simple_design}}
\label{appendix:proof_h_simple_design}
\begin{proof}[Proof of Proposition~\ref{prop:h_simple_design}]
Given the definition of $\gD$ in~\eqref{eq:h_design} and the specified low-pass filter $\rvk_{\text{low}} = (\frac{1}{4}, \frac{1}{2}, \frac{1}{4})$, $\gD$ operates as a strided band-Toeplitz matrix under Dirichlet boundary conditions, whose non-zero entries are confined to a diagonal band with a two-column shift between adjacent rows. Consequently, $\gD\gD^{\top}$ is a symmetric tridiagonal Toeplitz matrix, with main diagonal entries of $\frac{3}{8}$ and first off-diagonal entries of $\frac{1}{16}$.
By the Gershgorin circle theorem, the eigenvalues of $\gD\gD^{\top}$ lie within $[\frac{1}{4}, \frac{1}{2}]$. This implies that the non-zero singular values of $\gD$ lie within $[\frac{1}{2}, \frac{1}{\sqrt{2}}]$, yielding $2^{-2m}\mI \preceq \gD^m(\gD^{m})^{\top} \preceq 2^{-m}\mI$. Since $\text{rank}(\gD^m) = K/2^{m}$, the matrix $(\gD^{m})^{\top}\gD^{m}$ has $K/2^{m}$ non-zero eigenvalues, each bounded within $[2^{-2m}, 2^{-m}]$. Thus, the matrix $\mI - (\gD^{m})^{\top}\gD^{m}$ has $K(1-2^{-m})$ eigenvalues equal to $1$, while its remaining $K/2^{m}$ eigenvalues lie in $[1-2^{-m}, 1-2^{-2m}]$. Consequently, we obtain the following upper bound on the domain consistency error
\begingroup
\begin{equation*}
\begin{aligned}
\E_{q_{X,Y}(\rvx,\rvy)}\|h(\rvx) - h(\rvy)\|_{2}^2 
= & \Tr\big((\gD^{m})^{\top}\gD^{m}(\gD^{m})^{\top}\gD^{m} \, \E_{q_{\rvn}(\rvn)}[\rvn\rvn^{\top}]\big) \\
\leq & 2^{-m} \, \Tr\big((\gD^{m})^{\top}\gD^{m} \, \E_{q_{\rvn}(\rvn)}[\rvn\rvn^{\top}] \big) \\
\lesssim &  K / 2^{3m}.
\end{aligned}
\end{equation*}
\endgroup
Similarly, we lower bound the information preservation error
\begingroup
\begin{equation*}
\begin{aligned}
\E_{q_{X}(\rvx)}\|h(\rvx) - \rvx\|_2^2 
= & \Tr\big(\big(\mI - (\gD^{m})^{\top}\gD^{m}\big)^2 \, \E_{q_{X}(\rvx)}[\rvx\rvx^{\top}]\big) \\
\gtrsim & K \, (1-2^{-m}) + K\,2^{-m}\left(1-2^{-2m}\right)^2 \\
\asymp &  K \, (1-2^{-3m}),
\end{aligned}
\end{equation*}
\endgroup
where $\asymp$ denotes equivalence up to a constant.
\end{proof}

\section{Architecture of Multi-Scale Image Representation (MSR) Mapping}
\label{sec:archi_msr}
As established in~\eqref{eq:msr_mapping}, our MSR mapping is defined as
\begin{equation*}
h(\rvx) = f^{-1}([f(\rvx)]_{1:K/2^{T}}, \mathbf{0}),
\end{equation*}
where the invertible mapping $f = f_{T} \circ \cdots \circ f_{1}$ is composed of $T=3$ scale levels, with each level defined as $f_{l} = g_{l} \circ \mW$. Here, $\mW = (\mW_{L}^{\top}, \mW_{H}^{\top})^{\top}$ represents a wavelet operator partitioned as
\begin{equation*}
\begin{aligned}
& \mW_{L}: \rvx \mapsto \downarrow_{2} (\rvk_{\text{low}} \ast \rvx ), \\
& \mW_{H}: \rvx \mapsto \big(
\downarrow_{2} (\rvk_{\text{high}}^{(1)} \ast \rvx ); \ldots; 
\downarrow_{2} (\rvk_{\text{high}}^{(r)} \ast \rvx )
\big),
\end{aligned}
\end{equation*}
where $\rvk_{\text{low}}$ denotes a low-pass filter and $\{\rvk_{\text{high}}^{(i)}\}_{i=1}^{r}$ constitutes a set of high-pass filters. In our implementation, we employ the linear B-spline wavelet tight frame~\cite{daubechies2003framelets}, which specifies
\begin{equation*}
\begin{aligned}
& \rvk_{\text{low}} = (\tfrac{1}{4}, \tfrac{1}{2}, \tfrac{1}{4}), \\
& \rvk_{\text{high}}^{(1)} = (-\tfrac{1}{4}, \tfrac{1}{2}, -\tfrac{1}{4}), \, 
\rvk_{\text{high}}^{(2)} = (\tfrac{\sqrt{2}}{4}, 0, -\tfrac{\sqrt{2}}{4}).
\end{aligned}
\end{equation*}
The normalizing flow $g_{l} = g_{l}^{(M)} \circ \cdots \circ g_{l}^{(1)}$ comprises a sequence of $M=8$ flow blocks. Each block $g_{l}^{(i)}$ adopts the architecture of the Glow step proposed in~\cite{kingma2018glow}, consisting of an ActNorm layer, an invertible $1\times 1$ convolution, and an affine coupling layer $\phi_{l}^{(i)}$:
\begin{equation*}
g_{l}^{(i)} = \phi_{l}^{(i)} \circ \text{Inv}_{1\times1} \circ \text{ActNorm}.
\end{equation*}
We refer the reader to~\cite{kingma2018glow} for the implementation details of the ActNorm and $\text{Inv}_{1\times 1}$ layers. Specifically, we formulate the affine coupling layer $\phi_{l}^{(i)}$ as
\begin{equation*}
\phi_{l}^{(i)} \big(\rvc_{L}, \rvc_{H} \big) = 
\begin{cases}
\big(\rvc_{L}, \rvs_{l}^{(i)}(\rvc_{L}) \odot \rvc_{H} + \rvt_{l}^{(i)}(\rvc_{L}) \big) & \text{if $i$ is odd,} \\
\big(\rvc_{L} + \rvt_{l}^{(i)}(\rvc_{H}), \rvc_{H} \big) & \text{if $i$ is even,}
\end{cases}
\end{equation*}
where $\rvc_{L}$ and $\rvc_{H}$ denote the low- and high-frequency components of the input, respectively; $\rvs_{l}^{(i)}$ and $\rvt_{l}^{(i)}$ parameterize the scale and shift coefficients; and $\odot$ denotes the Hadamard product. Because each flow block $g_{l}^{(i)}$ is invertible and the wavelet operator $\mW$ admits a pseudo-inverse $\mW^{\top}$ due to the perfect reconstruction property $\mW^{\top} \mW = \mI$, the overall invertibility of the mapping $f$ is mathematically guaranteed.

\section{Proof of Proposition~\ref{prop:info_trade-off}}
\label{appendix:trade-off_info}


\begin{proof}[Proof of Proposition~\ref{prop:info_trade-off}]
Let $H(X) = -\E_{p(X)} \log p(X)$ denote the Shannon entropy of a random variable $X \sim p$, and let $I(X; Y) = \E_{p(X,Y)} \log \frac{p(X,Y)}{p(X) p(Y)}$ denote the mutual information. The information preservation condition implies
\begin{equation}
\label{eq:information_preservation_info}
I(\rvh_{\rvx}; \rvx) = H(\rvx)
\end{equation}
while the domain consistency condition dictates that $H(\rvh_{\rvx} | \rvy) = H(\rvh_{\rvy} | \rvy) = 0$.
By the data processing inequality,
\begin{equation*}
I(\rvx; \rvy) \geq I(\rvh_{\rvx}; \rvy) = H(\rvh_{\rvx}) - H(\rvh_{\rvx} | \rvy) = H(\rvh_{\rvx}).
\end{equation*}
Since $H(\rvh_{\rvx}) \geq I(\rvh_{\rvx}; \rvx)$, it follows that
\begin{equation}
\label{eq:domain_consistency_info}
I(\rvx; \rvy) \geq I(\rvh_{\rvx}; \rvx).
\end{equation}
Combining~\eqref{eq:information_preservation_info} and~\eqref{eq:domain_consistency_info} yields $H(\rvx | \rvy) 
= H(\rvx) - I(\rvx; \rvy) \leq 0$, which contradicts the assumption that $H(\rvx | \rvy) > 0$.
\end{proof}

\bibliographystyle{plain}
\bibliography{ref}

\end{document}